\newtheorem{definition}{Definition}
\newtheorem{theorem}{Theorem}
\newtheorem{corollary}{Corollary}
\title{Causal Discovery with Cascade Nonlinear Additive Noise Models}
\author{
	Ruichu Cai$^1$\and
	Jie Qiao$^1$\and
	Kun Zhang$^2$\and
	Zhenjie Zhang$^3$ \and
	Zhifeng Hao$^4$
	\affiliations
	$^1$School of Computers, Guangdong University of Technology, China\\
	$^2$Department of philosophy, Carnegie Mellon University\\
	$^3$Singapore R\&D, Yitu Technology Ltd.\\
	$^4$School of Mathematics and Big Data, Foshan University, China
	\emails
	cairuichu@gdut.edu.cn,
	qiaojie.chn@gmail.com,
	kunz1@cmu.edu,
	zhenjie.zhang@yitu-inc.com,
	zfhao@gdut.edu.cn
}
\begin{document}
	
	\maketitle
	
	\begin{abstract}
		
		Identification of causal direction between a causal-effect pair from observed data has recently attracted much attention. Various methods based on functional causal models have been proposed to solve this problem, by assuming the causal process satisfies some (structural) constraints and showing that the reverse direction violates such constraints. The nonlinear additive noise model has been demonstrated to be effective for this purpose, but the model class is not transitive--even if each direct causal relation follows this model, indirect causal influences, which result from omitted intermediate causal variables and are frequently encountered in practice, do not necessarily follow the model constraints; as a consequence, the nonlinear additive noise model may fail to correctly discover causal direction. In this work, we propose a cascade nonlinear additive noise model to represent such causal influences--each direct causal relation follows the nonlinear additive noise model but we observe only the initial cause and final effect. We further propose a method to estimate the model, including the unmeasured intermediate variables, from data, under the variational auto-encoder framework. Our theoretical results show that with our model, causal direction is identifiable under suitable technical conditions on the data generation process. Simulation results illustrate the power of the proposed method in identifying indirect causal relations across various settings, and experimental results on real data suggest that the proposed model and method greatly extend the applicability of causal discovery based on functional causal models in nonlinear cases.
		
	\end{abstract}
	
	\section{Introduction}

	Understanding causal relationships is a fundamental problem in various disciplines of science, %Different from the association study, the 
	and causal direction identification is an essential issue in causality studies. It is well known that using randomized experiments to identify causal influences usually encounters unethical or substantial expense issues. Fortunately, inferring causal relations from pure observations, also known as \textit{causal discovery from observational data}, has demonstrated its power in empirical studies and has been a focus in causality research.
	
	%crc：其实这段干脆去掉，加一个related work section？
	Various methods have been proposed to infer the causal direction, by exploring properly constrained forms of functional causal models (FCMs). A functional causal model represents the effect $Y$ as a function of its direct causes $X$ and independent noise, i.e., $Y=f(X;\epsilon), X\Vbar \epsilon$. Without constraints on $f$, then for any two variables one can always express one of them as a function of the other and independent noise \cite{Zhangetal15_TIST}. However, it is interesting to note that with properly constrained FCMs, the causal direction between $X$ and $Y$ is identifiable because the independence condition between the noise and cause holds only for the true causal direction and is violated for the wrong direction. Such FCMs include the Linear, Non-Gaussian, Acyclic Model (LiNGAM)~\cite{shimizu2006linear}, in which $Y = \mathbf{a}^\intercal X +\epsilon$ with linear coefficients $\mathbf{a}$, the nonlinear additive noise model (ANM)~\cite{hoyer2009nonlinear}, in which $Y = f(X)+\epsilon$, and the post-nonlinear (PNL)
	causal model~\cite{zhang2009identifiability}, which also considers possible nonlinear sensor or measurement distortion $f_2$ in the causal process: $Y = f_2(f_1(X)+\epsilon)$. It has been shown that in the generic case, for data generated by the above FCMs, the reverse direction will not admit the same FCM class with independent noise. One can then find causal direction by estimating the FCM followed by testing for independence between the hypothetical cause and estimated noise \cite{hoyer2009nonlinear,zhang2009identifiability}.

	In reality, we can usually record only a subset of all variable which are causally related. If some variable is the direct cause of only one measured variable and is not measured, it is considered as part of the omitted factors, or noise. If a hidden variable is a direct cause of two measured variables, it is a confounder, and causal discovery in the presence of confounders is challenging, although there exist some methods with asymptotic correctness guarantees, such as the FCI algorithm \cite{spirtes2000causation}. In this paper, we are concerned with unmeasured intermediate causal variables. Suppose $X_1 \rightarrow X_2 \rightarrow X_3$, with $X_2$ unmeasured, and that each direct causal influence can be represented by a FCM in a certain class. If the direct causal relations are linear with additive noise, then the causal influence $X_1 \rightarrow X_3$ still follows a linear model with additive noise. However, if each direct causal influence follows the ANM, the causal influence $X_1 \rightarrow X_3$ does not necessarily follow the same model class. Fig.~\ref{fig:illust} gives an illustration of this phenomenon of ``non-transitivity of nonlinear causal model classes," in which $X_2 = 2\tanh(5X_1) + N_2$, and $X_3 = (X_2/2)^3 + N_3$, with $X_1$, $N_2$, and $N_3$ mutually independent and following the uniform distribution between $-0.5$ and $0.5$. As seen from the heterogeneity of the noise in $X_3$ relative to $X_1$, given in Fig.~\ref{fig:illust}(c), the causal influence from $X_1$ to $X_3$ clearly does not admit a nonlinear model with additive noise. Hence, even for the correct causal direction, which is from $X_1$ to $X_3$, the independent noise condition is violated, and existing methods for causal direction determination by checking whether regression residual is independent from the hypothetical cause may fail. The PNL is more general than the additive noise model -- in this example, if $N_3$ is zero, then $X_1 \rightarrow X_3$ will follow this model. However, the PNL model class is also non-transitive. 
	
	\begin{figure}[htbp] \hspace{-3mm}
		
		\centering
		\includegraphics[width=.5\textwidth]{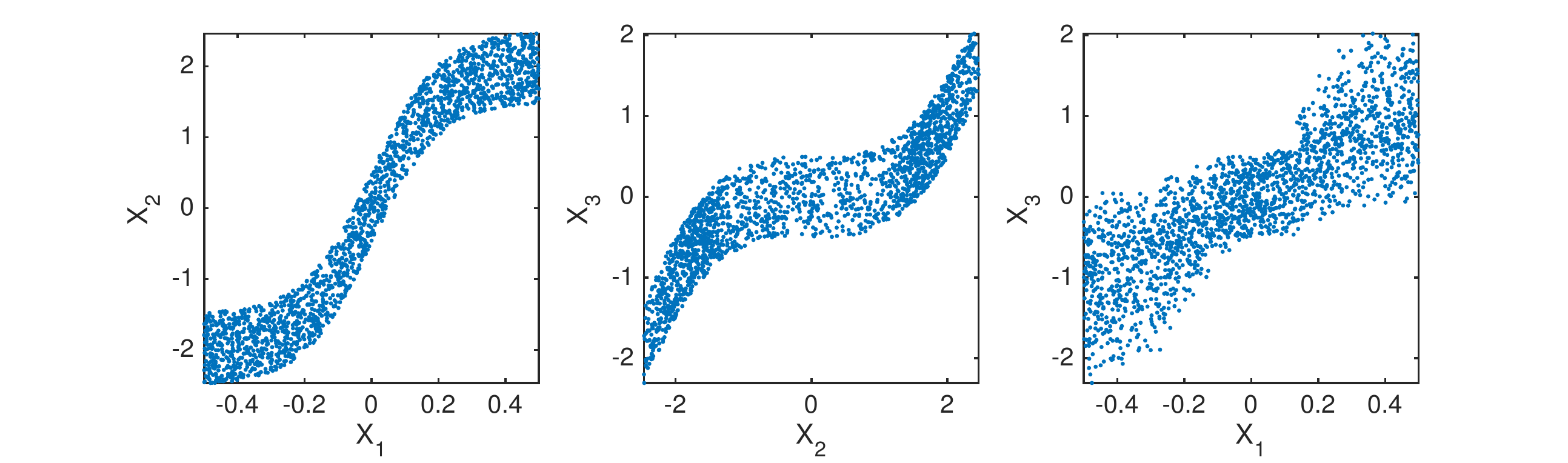}\\
		~~~~(a)~~~~~~~~~~~~~~~~~~~~~~~~(b)~~~~~~~~~~~~~~~~~~~~~~~~(c)
		\caption{Illustration of non-transitivity of nonlinear causal model classes, in which $X_1\rightarrow X_2 \rightarrow X_3$ and each direct causal influence follows a nonlinear model with additive noise. Panels (a), (b), and (c) show the scatter plot of $X_1$ and $X_2 = 2\tanh(5X_1) + N_2$, that of $X_2$ and $X_3 = (X_2/2)^3 + N_3$, and that of $X_1$ and $X_3$, respectively.}
		\label{fig:illust}
		\vspace{-10pt}
	\end{figure}
	
	\begin{figure}[htbp]
		\vspace{-5pt}
		\centering
		\includegraphics[width=0.4\textwidth]{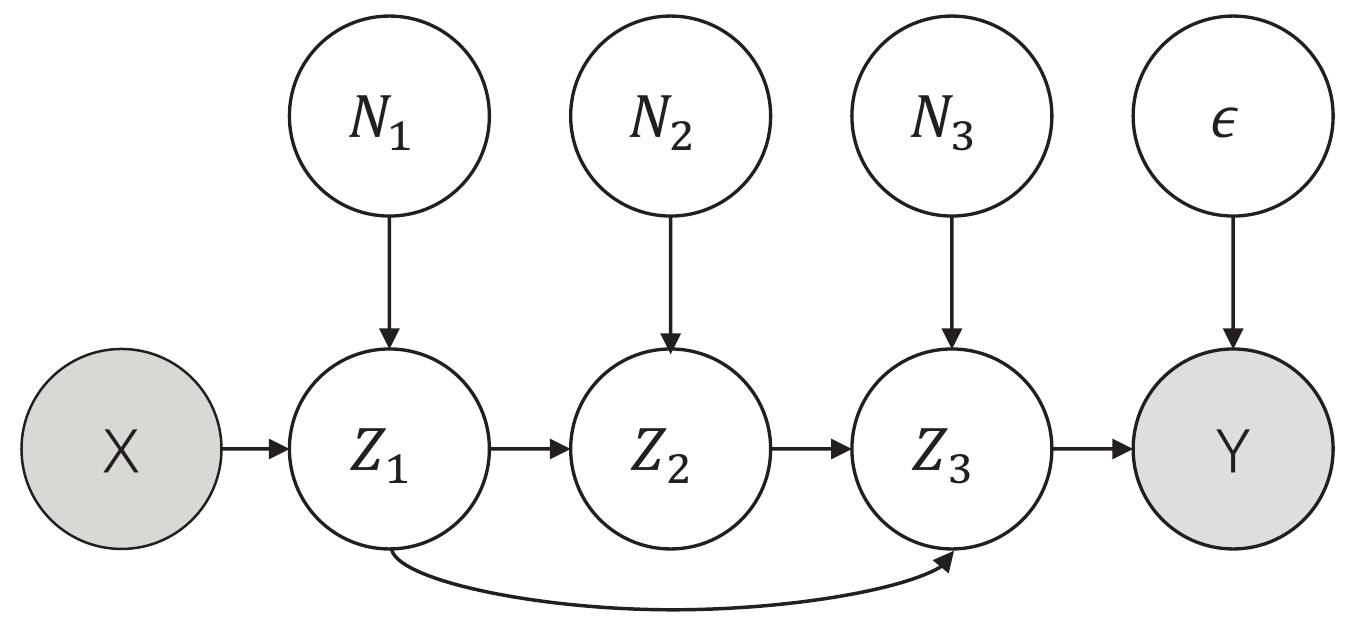}
		\caption{Illustration of the CANM, where the causal chain from $X$ to $Y$ consists of three unmeasured intermediate variables $Z_1,Z_2,Z_3$ with their associated noises $N_1,N_2,N_3$.}
		\label{fig:toy_example}
		\vspace{-5pt}
	\end{figure}

	This paper deals with such indirect, nonlinear causal relations, which seem to be ubiquitous in practice. Finding causal direction for such causal relations has recently been posed as an open problem \cite{spirtes2016causal}. In particular, we aim to find the causal direction between $X$ and $Y$ that are generated according the process given in Fig. \ref{fig:toy_example}, in which there might be a number of unmeasured intermediate causal variables $Z_i$ in between and each direct causal influence, e.g., the influence from $Z_1$ and $Z_2$ on $Z_3$, follows the ANM. We name the causal model from $X$ and $Y$ given in Fig. \ref{fig:toy_example} a Cascade Additive Noise Model (CANM). We note that the considered problem is different from causal discovery in the presence of confounders, for which there have been a number of studies, including the FCI \cite{spirtes2000causation}, RFCI \cite{colombo2012learning}, M3B \cite{yu2018mining} algorithms, and methods relying on stronger assumptions \cite{janzing2009identifying,zhang2010invariant}. \cite{kocaoglu2018entropic} propose an algorithm to search for the latent variable along the path $X$ and $Y$ but they only consider discrete random variables.

	%In conclusion, inferring the causal direction in the cascade additive noise model is still a challenging problem. 
	To the best of our knowledge, this is a first study as to finding causal direction between indirectly and nonlinearly related variables. %By assuming the causal relation following the additive noise model, we successfully extended the nonlinear additive noise model to the indirect cases with unmeasured intermediate variables. 
	The considered causal model can be seen as a cascade of processes, each of which follows the ANM, and the intermediate variables are unmeasured. Intuitively, the independence between the noise and cause is still helpful in finding causal direction--the wrong direction will not follow the independence noise condition in the generic case, allowing us to correctly identify causal direction. This will be supported by our theoretical studies and empirical results in subsequent sections. 
	
	\section{Cascade Additive Noise Model}\label{sec: model}

	Without loss of generality, let $X$ be the cause of effect $Y$ ($X \to Y$), with unmeasured intermediate variables $Z_i$ between them, as shown in Fig.~\ref{fig:toy_example}. We further assume there is no confounder in the mechanism and the data generation follows the nonlinear additive noise assumption. Then, such an indirect causal mechanism can be formalized by the CANM in the following definition.
	
	\begin{definition}
		A CANM for cause $X$ and effect $Y$ is that there exists a sequence of unmeasured intermediate variables between $X$ and $Y$ such that no variable in the latter is the cause of the former one:
		\begin{equation}
		\begin{cases}
		Z_{1} =f_{1}( X) +N_{1}, %&X\Vbar N_1  
		\\
		Z_{t} =f_{t}( \mathbf{Z}_{pa( t)}) +N_{t}, %& \mathbf{Z}_{pa( t)}\Vbar N_t , t=2,3,...,T 
		\\
		Y=f_{T+1}( \mathbf{Z}_{pa( y)}) +\epsilon, %  & \mathbf{Z}_{pa( y)}\Vbar \epsilon ,
		\end{cases}
		\end{equation}
		where $X$, $N_i$, and $\epsilon$ are mutually independent, $T$ denotes depth of the chain, and $\mathbf{Z}_{pa(t)},\mathbf{Z}_{pa(y)}$ denote parents of the $Z_t$ and $y$, respectively. To ensure the cascade structure, 
		the causal relations among $Z_i$ are recursive. %Similarly, $Z_T\subset \mathbf{Z}_{pa(y)}$. 
		Let $\textbf{f}=\{f_1,f_2,...,f_T\}$ and $\textbf{N}=\{N_1,N_2,...,N_T\}$ denote a set of nonlinear functions and the corresponding additive noises at each depth in the chain, respectively. Naturally, here the direct cause and the noises are independent from each other.
	\end{definition}
	
	We are given a set of data $\mathcal{D}=\{x^{(i)} ,y^{(i)}\}_{i=1}^m$. Let $\mathbf{\theta}$ be the parameters of the causal mechanism. Combing all the independence relations of CANM, we can derive its marginal log-likelihood as follows:
	\begin{small}
		\begin{equation}
		\begin{aligned}
		& \log\prod ^{m}_{i=1}\int p_{\mathbf{\theta }} (x^{(i)} ,y^{(i)} ,\mathbf{z} )d\mathbf{z}\\
		= &\log\prod ^{m}_{i=1}\int p_{\mathbf{\theta }} (x^{(i)} )p_{\mathbf{\theta }}(y^{(i)} |\mathbf{z}_{pa(y)} )\prod ^{T}_{t=2} p_{\mathbf{\theta }} (z_{t} |\mathbf{z}_{pa(t)} )p_{\mathbf{\theta }} (z_{1} |x^{(i)} )d\mathbf{z}\\
		= &\log\prod ^{m}_{i=1}\int p (x^{(i)} )p_{\mathbf{\theta }} (\epsilon ^{(i)} =y^{(i)} -f(x^{(i)} ,\mathbf{n} ))\prod ^{T}_{t=1} p_{\mathbf{\theta }} (n_{t} )d\mathbf{n}\\
		= &\log\prod ^{m}_{i=1}\int p_{\mathbf{\theta }} (x^{(i)} ,\epsilon ^{(i)} ,\mathbf{n} )d\mathbf{n}.
		\end{aligned}
		\label{eq:joint likelihood}
		\end{equation}
	\end{small}
	Eq. \ref{eq:joint likelihood} first decomposes the joint likelihood based on the Markov condition \cite{spirtes2000causation}, then applies the independence property between the cause and the noise in the second equality, i.e., $p(Z_{t} |\mathbf{Z}_{pa(t)})=p(N_t=Z_{t}-f_t(\mathbf{Z}_{pa(t)}) |\mathbf{Z}_{pa(t)})\xlongequal {\mathbf{Z}_{pa(t)} \Vbar N_t}p(N_t=Z_{t}-f_t(\mathbf{Z}_{pa(t)}))$. At the same time, we replace $d\mathbf{z}$ with $d\mathbf{n}$ and rewrite function $f_{T+1}(\mathbf{Z}_{pa(y)})$ as $f(X,\mathbf{N})$, because the last unobserved direct cause $\mathbf{Z}_T\subset \mathbf{Z}_{pa(t)}$ contains all the information of the noise $\mathbf{N}$ and cause $X$ relative to $Y$.

	In the above derivation, we used the transformation from $X$ and noises to $Y$. The property of the transformation helps study identifiability and find a practical solution. In light of the independence property of the noises, below we propose a variational approach to approximating the marginal log-likelihood as well as identifying the causal direction.

	\subsection{Variational Solution of CANM}

	The variational solution to estimation of CANM consists of two steps. First, we take advantage of the independence property in CANM to replace the latent variable $\mathbf{Z}$ with $\mathbf{N}$. Second, we find an amortized inference distribution $q_{\phi } (\mathbf{N} |X,Y)$ with respect to the parameter $\phi$ to approximate the intractable posterior $p_{\theta}(\mathbf{N}|X,Y)$ and jointly optimize a variational lower bound of the marginal log-likelihood. Note that, different from the vanilla VAE, $Y$ can be seen as a function of $X$ and $N$ and, as a result, $N$ is a function of X and Y and we need to recover $N$ from both $X$ and $Y$.
	According to Eq. \ref{eq:joint likelihood}, the (log) marginal likelihood, as the sum over of the marginal likelihoods over individual data points:
	\begin{small}
		\begin{equation}
		\label{eq:variational lower bound}
		\begin{aligned}
		& \log\prod ^{m}_{i=1}\int p_{\mathbf{\theta }} (x^{(i)} ,\epsilon ^{(i)} ,\mathbf{n} )d\mathbf{n}\\
		= & \sum ^{m}_{i=1} 
		\underbrace{E_{\mathbf{n} \sim q_{\phi } (\mathbf{n} |x^{(i)} ,y^{(i)} )} \Big[\log \frac{p_{\theta }( x^{(i)} ,\epsilon ^{(i)} ,\mathbf{n})}{q_{\phi }(\mathbf{n} |x^{(i)} ,y^{(i)})}\Big]}_{:=\mathcal{L}\left( \theta ,\phi ;x^{(i)} \!,y^{(i)}\right)} + \\ 
		&~~~~~~~~~~~~~~~~~~KL(q_{\phi } (\mathbf{n} |x^{(i)}\! ,y^{(i)} )\| p_{\theta } (\mathbf{n} |x^{(i)}\! ,y^{(i)} ))\\
		\geqslant  & \sum ^{m}_{i=1}\mathcal{L}\left( \theta ,\phi ;x^{(i)}\! ,y^{(i)}\right),
		\end{aligned}
		\end{equation}
	\end{small}
	where $\mathcal{L}\left( \theta ,\phi ;x^{(i)} ,y^{(i)}\right)$ be the lower bound at data point $(x^{(i)} ,y^{(i)})$, resulting from 
	%%Here $\mathcal{L}$ is the variational lower bound of the marginal likelihood, 
	approximating an intractable posterior $p_{\theta } (\mathbf{n} |x^{(i)} ,y^{(i)} )$ by $q_{\phi } (\mathbf{n} |x^{(i)} ,y^{(i)} )$. Under the framework of CANM, the lower bound of the total marginal likelihood can be further estimated as follows:   
	\begin{equation}
	\label{eq:objective}
	\begin{aligned}
	& \sum ^{m}_{i=1}\mathcal{L}\left( \theta ,\phi ;x^{(i)} ,y^{(i)}\right)\\
	& \begin{aligned}
	=\sum ^{m}_{i=1} E_{\mathbf{n} \sim q_{\phi } (\mathbf{n} |x^{(i)} ,y^{(i)} )} & \left[ -\log q_{\phi }\left(\mathbf{n} |x^{(i)} ,y^{(i)}\right)\right. \\
	& \left. +\log p_{\theta }\left( x^{(i)} ,\epsilon ^{(i)} ,\mathbf{n}\right)\right]
	\end{aligned}\\
	& \begin{aligned}
	= & \sum ^{m}_{i=1}\log p\left( x^{(i)}\right) -KL(q_{\phi } (\mathbf{n} |x^{(i)} ,y^{(i)} )\| p_{\mathbf{\theta }}(\mathbf{n} ))\\
	& +E_{\mathbf{n} \sim q_{\phi } (\mathbf{n} |x^{(i)} ,y^{(i)} )}\left[\log p\left(\epsilon ^{(i)}= y^{(i)} -f\left( x^{(i)} ,\mathbf{n} ;\theta \right)\right)\right].
	\end{aligned}
	\end{aligned}
	\end{equation}
	%%where $\epsilon$ denotes the noise of the latest layer. 
	The details of derivation can be found in Supplementary \ref{sup:elbo}. As shown in Eq. \ref{eq:variational lower bound}, the lower bound $\mathcal{L}$ is tight at $KL(q_{\phi } (\mathbf{n} |x^{(i)},y^{(i)} )\| p_{\mathbf{\theta }} (\mathbf{n} |x^{(i)},y^{(i)} ))=0$. That is, when $q_{\phi } (\mathbf{n} |x^{(i)},y^{(i)} )= p_{\mathbf{\theta }} (\mathbf{n} |x^{(i)},y^{(i)} )$, the lower bound is equal to the marginal log-likelihood. %%Thus, the marginal likelihood can be approximated by maximizing 
	Below we will maximize the variational lower bound.
	
	Here, we assume %%all the prior 
	the distributions of noise $\mathbf{N}$ can be factorized as $p_{\mathbf{\theta }}(\mathbf{N})=\prod_{t=1}^{T}p_{\mathbf{\theta }}(N_t)$. Note that if $\mathbf{N}$ is an empty set, the above lower bound is equivalent to the log-likelihood of the standard additive noise model. 
	
	\subsection{Variational Auto-encoder}

	\begin{figure}[htbp]
		\vspace{-10pt}
		\centering
		\includegraphics[width=0.4\textwidth]{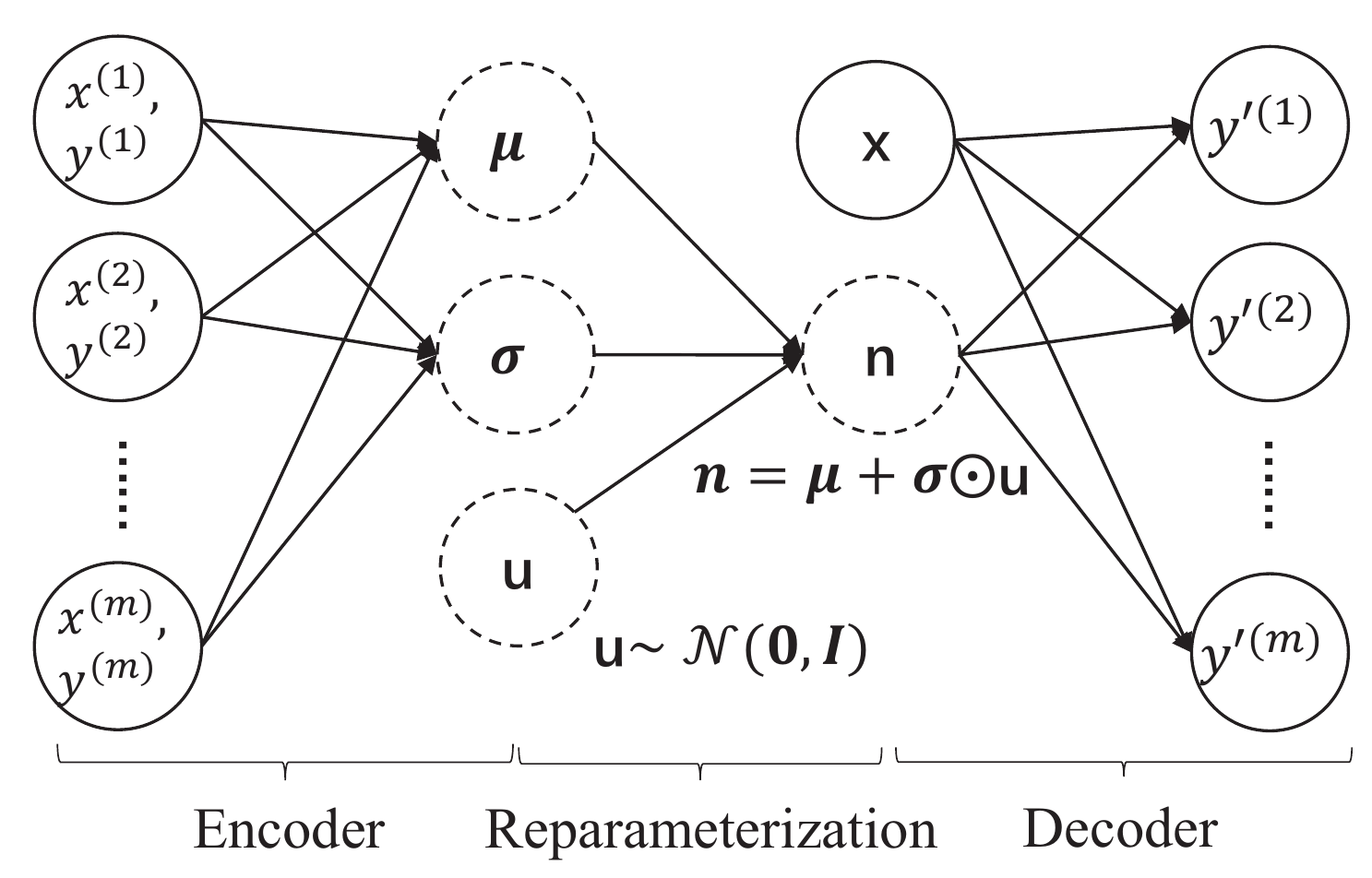}
		\caption{Toy Example for CANM Variational Auto-encoder.}
		\label{fig:vae}
	\end{figure}
	
	The design of the variational auto-encoder (VAE) generally follows the typical configuration in \cite{Kingma2014}. We denote $q_{\phi } (\mathbf{n} |x^{(i)},y^{(i)})$ as \textit{encoder} and $p_{\theta}(y^{(i)}|\mathbf{n},x^{(i)})$ as \textit{decoder}, using a multilayer perceptron (MLP) as an universal approximator for this two functions. 
	
	In the encoder phase, the noises of CANM are inferred by an encoder network with a reparameterization trick. That is, reparameterize the random variable $\mathbf{n} \sim q_{\phi } (\mathbf{n} |x,y)$ with a differentiable transformation ${ h_{\phi } (x,y,u)}$ such that $\mathbf{n} \sim h_{\phi } (x,y,u)$ with $u\sim p(u)$. Then the expectation in the lower bound $E_{\mathbf{n} \sim q_{\phi } (\mathbf{n} |x,y)}\left[ p\left( \epsilon ^{(i)} =y^{(i)} -f\left( x^{(i)} ,\mathbf{n} ;\theta \right)\right)\right]$ can be estimated using Monte Carlo with the reparameterization trick over $L$ samples. 
	
	In the decoder phase, we estimate $\epsilon ^{(i)}$ by calculating the difference between the sample $y^{(i)}$ and the reconstruction from decoder $f\left( x^{(i)} ,h_{\phi } (x^{(i)} ,y^{(i)} ,{\displaystyle u^{(l)}} );\theta \right)$, where $u^{(l)}\sim p(u)$. Finally, through the alternating processing on the encoder and decoder phases, we can optimize the lower bound until it converges.
	
	Fig. \ref{fig:vae} shows a toy example of the structure of the CANM variational auto-encoder with $q_{\phi } (\mathbf{n} |x^{(i)} ,y^{(i)} )=\mathcal{N} (\mathbf{n} ;\mathbf{\mu} _{\phi }\left( x^{(i)} ,y^{(i)}\right) ,\mathbf{\sigma} _{\phi }\left( x^{(i)} ,y^{(i)}\right) \mathbf{I} )$, where $\mathbf{\mu} _{\phi }$ and $\mathbf{\sigma} _{\phi }$ are deterministic function with parameter $\phi$. In the encoder phase, we encode the samples into the noises using a reparameterization trick $\mathbf{n}^{(l)} =\mu _{\phi }\left( x^{(i)} ,y^{(i)}\right) +\mathbf{\sigma} _{\phi }\left( x^{(i)} ,y^{(i)}\right) u^{(l)}$ where $u^{(l)} \sim \mathcal{N}( 0,1)$. In the decoder phase, the sample $y^{(i)}$ is reconstructed by the decoder ${ y\prime ^{(i)} =f\left( x^{(i)} ,\mathbf{n}^{(l)} ;\theta \right)}$.

	\subsection{Practical Algorithm}
	Finally, we propose a general principle that makes use of the VAE to estimate the marginal log-likelihood as well as identify the causal direction.
	\begin{algorithm}[htbp]
		\caption{Inferring causal direction with CANM}
		\label{alg:inference}
		\begin{algorithmic}[1]
			\Require Data samples $\{(x^{(i)},y^{(i)})\}_{i=1}^{m}$.%, the prior of 
			\Ensure The causal direction.
			\State Split the data into training and test sets;
			\State Choose the best number of latent variables by optimizing the variational lower bound (Eq. \ref{eq:variational lower bound}) on the training set and evaluating the performance on the test set; 
			\State  Optimize the lower bound in both directions with the best number of latent variables on the full dataset, obtaining $\mathcal{L}_{X\to Y}$ and $\mathcal{L}_{Y\to X}$ (see Eq. \ref{eq:objective}), respectively. 
			\If {$\mathcal{L}_{X\to Y} > \mathcal{L}_{Y\to X} + \delta$, where $\delta$ is a pre-asigned small positive number,}, 
			\State Infer $X \to Y$
			\ElsIf {$\mathcal{L}_{X\to Y} < \mathcal{L}_{Y\to X} - \delta,$} 
			\State Infer $Y \to X$
			\Else
			\State Non-identifiable
			\EndIf
		\end{algorithmic}
	\end{algorithm}
	
	Algorithm \ref{alg:inference} consists of two phases; the first is model selection, selecting the best number of latent noises, and the second is to identify the causal direction. In  phase 1, by splitting the data into training and testing sets, the best number of noises is selected based on the performance on the test set (Line 1-2). In phase 2, we use the number of the latent noises determined in phase 1 to optimize the variational lower bound on the full dataset and then identify causal direction according to the likelihood for both directions (Line 3-10).
	
	\section{Identifiability}
	In this section, we investigate whether there exist any CANMs whose generated data also admit a CANM in the reverse (anti-causal) direction. In the following theorem, we propose a way to derive the noise distribution for the reverse direction $p(\hat{\epsilon})$ by making use of the theory of Fourier transform \cite{bracewell1986fourier}. The causal direction is unidentifiable according to the CANM if $\hat{\epsilon}$ is independent from $Y$ and $\mathbf{\hat{N}}$ (i.e., the marginal likelihoods for both directions are equal). 
	
	\begin{theorem}
		\label{thm:unidentifiable}
		Let $X\to Y$ follow the cascade additive noise model, while there exists a backward model following the same form, i.e.
		\begin{equation}
		\label{eq:unidentifiable}
		\begin{aligned}[c]
		Y=f( X,\mathbf{N}) +\epsilon,\\
		X=g( Y,\mathbf{\hat{N}}) +\hat{\epsilon}, 
		\end{aligned}
		\qquad
		\begin{aligned}[c]
		X,\mathbf{N}, \textrm{ and }\epsilon \textrm{~are independent}, \\
		Y,\mathbf{\hat{N}}, \textrm{ and }\hat{\epsilon } \textrm{~are independent},
		\end{aligned}
		\end{equation}
		then the noise distribution of the reverse direction $ p_{\hat{\epsilon}}$ must be
		\begin{equation}
		\label{eq:thm1}
		p_{\hat{\epsilon }}\left(\hat{\epsilon }\right)\! =\!\! \int \! e^{2\pi i\hat{\epsilon } \cdot \nu }\frac{\int \!\! \int p( x) p(\mathbf{n}) p_{\epsilon }( y-f( x,\mathbf{n})) e^{-2\pi ix\cdot \nu } d\mathbf{n} dx}{p( y)\int p\left(\hat{\mathbf{n}}\right) e^{-2\pi ig\left( y,\hat{\mathbf{n}}\right) \cdot \nu } d\hat{\mathbf{n}}} d\nu ,
		\end{equation}
		where $f, g$ denote the function implied by the cascade process. 
	\end{theorem}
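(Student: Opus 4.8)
The plan is to compute the joint density $p(x,y)$ in two ways and equate them. From the forward CANM, marginalizing out the latent noises and using that $X$, $\mathbf{N}$ and $\epsilon$ are mutually independent, one gets
\[
p(x,y)=p(x)\int p(\mathbf{n})\,p_{\epsilon}\big(y-f(x,\mathbf{n})\big)\,d\mathbf{n},
\]
which is exactly the forward expression appearing inside Eq.~\ref{eq:thm1}. From the hypothesized backward CANM, write $p(x,y)=p(y)\,p(x\mid y)$; since $\hat{\mathbf{N}}$ and $\hat{\epsilon}$ are independent of $Y$ and of each other, the variable $X=g(Y,\hat{\mathbf{N}})+\hat{\epsilon}$ is, conditionally on $Y=y$, a sum of two independent terms, so $p(x\mid y)$ is a convolution in $x$: writing $\mu_y$ for the law of $g(y,\hat{\mathbf{N}})$ (fix $Y=y$, integrate $\hat{\mathbf{N}}$ against $p(\hat{\mathbf{n}})$), we have $p(x\mid y)=\int p_{\hat{\epsilon}}(x-w)\,d\mu_y(w)$.

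Next I would take the Fourier transform in $x$, with frequency variable $\nu$, of the identity obtained by equating the two expressions for $p(x,y)$. On the forward side this directly yields the numerator of Eq.~\ref{eq:thm1}, namely $\int\!\!\int p(x)p(\mathbf{n})p_{\epsilon}(y-f(x,\mathbf{n}))e^{-2\pi i x\cdot\nu}\,d\mathbf{n}\,dx$ (equivalently, the Fourier transform in $x$ of $p(\cdot,y)$). On the backward side the convolution theorem turns $p_{\hat{\epsilon}}*\mu_y$ into the product of transforms, giving $p(y)\,\widehat{p_{\hat{\epsilon}}}(\nu)\int p(\hat{\mathbf{n}})e^{-2\pi i g(y,\hat{\mathbf{n}})\cdot\nu}\,d\hat{\mathbf{n}}$, in which $\int p(\hat{\mathbf{n}})e^{-2\pi i g(y,\hat{\mathbf{n}})\cdot\nu}\,d\hat{\mathbf{n}}$ is the characteristic function of $g(y,\hat{\mathbf{N}})$. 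Dividing through to isolate $\widehat{p_{\hat{\epsilon}}}(\nu)$ and then applying the inverse Fourier transform $p_{\hat{\epsilon}}(\hat{\epsilon})=\int e^{2\pi i\hat{\epsilon}\cdot\nu}\widehat{p_{\hat{\epsilon}}}(\nu)\,d\nu$ gives precisely Eq.~\ref{eq:thm1}. Although its right-hand side is written with an explicit $y$, the derivation forces that expression to be constant in $y$ on the support of $p(y)$: any admissible $y$ may be used, and this $y$-invariance is itself a necessary consequence of the existence of a backward CANM, which is later exploited when testing identifiability.

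The main difficulty is the analytic bookkeeping rather than the algebra. To make the Fourier manipulations rigorous one needs enough regularity -- integrability and sufficient decay of $p(x)$, $p(\mathbf{n})$, $p(\hat{\mathbf{n}})$ and of $y\mapsto p_{\epsilon}(y-f(x,\mathbf{n}))$ -- so that the transforms and the inversion are well defined and Fubini justifies interchanging the $\mathbf{n}$- and $x$-integrations and the convolution factorization. Most importantly, the denominator $\int p(\hat{\mathbf{n}})e^{-2\pi i g(y,\hat{\mathbf{n}})\cdot\nu}\,d\hat{\mathbf{n}}$, being a characteristic function, may vanish on a set of frequencies, so one must assume it is nonzero for all but a Lebesgue-null set of $\nu$ in order for the division to be legitimate; these are the ``suitable technical conditions'' referred to in the statement, and under them the computation above is the complete argument.
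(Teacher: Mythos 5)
Your proposal is correct and follows essentially the same route as the paper's proof: Fourier-transform the backward conditional $p(x\mid y)$ in $x$, factor it via the convolution theorem into $\widehat{p_{\hat{\epsilon}}}(\nu)$ times the characteristic function of $g(y,\hat{\mathbf{N}})$, divide, invert the transform, and substitute the forward expression for $p(x\mid y)$ via Bayes' theorem. Your added remarks on the required regularity, the non-vanishing of the denominator, and the forced $y$-invariance of the right-hand side are sound observations that the paper's proof sketch leaves implicit.
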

	\begin{proof}
		See Supplementary \ref{sup:thm1} for a proof.
	\end{proof}

	Roughly speaking, regardless of the linear case, Theorem \ref{thm:unidentifiable} implies that the noise distribution in the reverse direction is generally coherent with $y$. To ensure such noise is independent from $Y$, one strict condition must holds, i.e., $\hat{\epsilon }$ should be independent from $Y$ in the sense that 
	$\forall y_1, y_2, 
	\int \! e^{2\pi i\hat{\epsilon } \cdot \nu }\frac{\int \!\! \int p( x) p(\mathbf{n}) p_{\epsilon }( y_1-f( x,\mathbf{n})) e^{-2\pi ix\cdot \nu } d\mathbf{n} dx}{p( y_1)\int p\left(\hat{\mathbf{n}}\right) e^{-2\pi ig\left( y_1,\hat{\mathbf{n}}\right) \cdot \nu } d\hat{\mathbf{n}}} d\nu 
	=
	\int \! e^{2\pi i\hat{\epsilon } \cdot \nu }\frac{\int \!\! \int p( x) p(\mathbf{n}) p_{\epsilon }( y_2-f( x,\mathbf{n})) e^{-2\pi ix\cdot \nu } d\mathbf{n} dx}{p( y_2)\int p\left(\hat{\mathbf{n}}\right) e^{-2\pi ig\left( y_2,\hat{\mathbf{n}}\right) \cdot \nu } d\hat{\mathbf{n}}} d\nu 
	$. %It should be noted that the general condition for the independence relationship $\hat{\epsilon} \Vbar Y$ holds in CANM is still an open problem.
	However, in general, it seems that such a condition holds only in restrictive cases. Therefore, in most cases, after the latent noise is recovered, we can identify the causal direction by using the independence property for $(X,\mathbf{N}, \epsilon)$. 
	
	To further illustrate the implication of Theorem \ref{thm:unidentifiable}, we provide two special cases in the following corollaries. In Corollary \ref{corollary:linear}, we show that CANM is unidentifiable if the generation process is linear Gaussian. In Corollary \ref{corollary:anm}, we show the connection with ANM when there is no unmeasured intermediate variables, and shows a generic choices of $f$, $p_X(x)$, and $p_\epsilon(\epsilon)$ for the identification of the model. Those two special cases are consistent with the previous results.
	\begin{corollary}\label{corollary:linear}
		Assume that CANM is linear Gaussian, i.e., 
		\begin{equation*}
		Y=aX+bN+\epsilon ,
		\end{equation*}
		where $X,N,\epsilon \sim \mathcal{N}(0,1)$, then their exist a backward CANM
		\begin{equation*}
		X=\frac{a}{a^{2} +b^{2} +1}Y+\frac{a}{\sqrt{a^{2} +b^{2} +1}}\hat{N}+\hat{\epsilon},
		\end{equation*} 
		where $\hat{N}, \hat{\epsilon} \sim \mathcal{N}(0,1)$ and $\hat{\epsilon }$ is independent of $Y$ and $\hat{N}$.
		
	\end{corollary}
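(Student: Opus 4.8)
The plan is to exploit the fact that a linear Gaussian mechanism makes $(X,Y)$ jointly Gaussian, and that a jointly Gaussian law always admits an anti-causal factorization obtained by ordinary Gaussian conditioning. First I would record the first two moments of the forward model: since $X,N,\epsilon$ are independent standard normals, $(X,Y)$ is mean-zero jointly Gaussian with $\mathrm{Var}(X)=1$, $\mathrm{Var}(Y)=a^{2}+b^{2}+1$, and $\mathrm{Cov}(X,Y)=a$. Because a CANM is required to reproduce nothing beyond the joint law of $(X,Y)$, it suffices to produce a linear expression $X=\alpha Y+\beta\hat N+\hat\epsilon$ whose induced joint distribution over $(X,Y)$ has exactly these moments and whose components satisfy the independence pattern demanded by the CANM definition in the reverse direction; matching moments is enough precisely because everything in sight is Gaussian.

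Next I would obtain the backward conditional. Gaussian conditioning gives $E[X\mid Y]=\frac{\mathrm{Cov}(X,Y)}{\mathrm{Var}(Y)}\,Y=\frac{a}{a^{2}+b^{2}+1}\,Y$ and $\mathrm{Var}(X\mid Y)=1-\frac{a^{2}}{a^{2}+b^{2}+1}$, and — crucially — for jointly Gaussian variables the residual $R:=X-E[X\mid Y]$ is \emph{exactly} independent of $Y$, not merely uncorrelated. This already pins down the coefficient $\alpha=\frac{a}{a^{2}+b^{2}+1}$ appearing in the statement. It then remains to realize $R$ as $\beta\hat N+\hat\epsilon$, where $\hat N$ is the additive noise of a single unobserved intermediate variable $\hat Z_{1}=cY+\hat N$ and $\hat\epsilon$ is the final noise, so that $X=d\hat Z_{1}+\hat\epsilon$ with $\beta=d$ and $\alpha=dc$. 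Since any Gaussian splits as a sum of two independent Gaussians whose variances add to its own, one fixes $c$, $d$ and the noise scales so that the variance budget $\mathrm{Var}(X\mid Y)$ is distributed between the $\hat N$-channel and the $\hat\epsilon$-channel; the coefficient $\frac{a}{\sqrt{a^{2}+b^{2}+1}}$ on $\hat N$ records that choice.

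Finally I would verify the independence requirements: as every quantity is a linear combination of the independent Gaussians $X,N,\epsilon$, pairwise independence reduces to vanishing covariance, so the argument becomes the bookkeeping checks $\mathrm{Cov}(\hat\epsilon,Y)=0$, $\mathrm{Cov}(\hat\epsilon,\hat N)=0$ (and $\mathrm{Cov}(\hat N,Y)=0$ for an honest backward CANM), each immediate from defining $\hat\epsilon$ as the component of $X$ orthogonal to the span of $Y$ and $\hat N$. Since the resulting $\hat\epsilon$ is then independent of $Y$ and $\hat N$, the independence-based criterion cannot separate the two directions, which is exactly the unidentifiability claim; as a consistency check one can also substitute the resulting Gaussian densities into the Fourier-integral formula of Theorem~\ref{thm:unidentifiable} and confirm that numerator and denominator conspire to leave a $y$-independent Gaussian $p_{\hat\epsilon}$, exhibiting the linear Gaussian case as an instance of the restrictive regime discussed after that theorem. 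The only place that needs real care is the residual-splitting step: one must ensure the variance allocated to the $\hat N$-channel together with $\mathrm{Var}(\hat\epsilon)$ exactly exhausts $\mathrm{Var}(X\mid Y)$ while $Y,\hat N,\hat\epsilon$ remain mutually independent, so that what is constructed is a genuine CANM rather than a bare regression decomposition.
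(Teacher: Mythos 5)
Your route is genuinely different from the paper's. The paper proves the corollary by instantiating the Fourier-integral formula of Theorem~\ref{thm:unidentifiable} with Gaussian densities, grinding through the transforms, and choosing $c$ and $d$ so that the $y$-dependence cancels in the ratio of characteristic functions. You instead observe that $(X,Y)$ is jointly Gaussian, take the backward regression residual $R=X-\mathrm{E}[X\mid Y]$, use the exact (not just second-order) independence of $R$ from $Y$ for jointly Gaussian vectors, and then split $R$ into two independent Gaussian summands to play the roles of $\hat N$ and $\hat\epsilon$. This is more elementary and more transparent: the coefficient $\frac{a}{a^{2}+b^{2}+1}$ falls out of $\mathrm{Cov}(X,Y)/\mathrm{Var}(Y)$ rather than from a cancellation of complex exponentials, and the independence claims reduce to orthogonality in the Gaussian linear span. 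As a sanity check on Theorem~\ref{thm:unidentifiable} the paper's computation has some value, but your argument is the cleaner proof of the unidentifiability claim itself.

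There is, however, one genuine gap, and it sits exactly at the step you defer as ``bookkeeping.'' The residual-splitting must satisfy the variance budget
\begin{equation*}
\mathrm{Var}\Big(\tfrac{a}{\sqrt{a^{2}+b^{2}+1}}\hat N+\hat\epsilon\Big)\;=\;\mathrm{Var}(X\mid Y)\;=\;1-\frac{a^{2}}{a^{2}+b^{2}+1}\;=\;\frac{b^{2}+1}{a^{2}+b^{2}+1},
\end{equation*}
but with $\hat N,\hat\epsilon\sim\mathcal N(0,1)$ as in the statement the left side is $\frac{a^{2}}{a^{2}+b^{2}+1}+1$, which strictly exceeds the right side whenever $a\neq 0$. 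So the specific coefficients in the corollary cannot be realized by your (correct) construction; carrying out the check you postponed would force $\mathrm{Var}(\hat\epsilon)=\frac{b^{2}+1-a^{2}}{a^{2}+b^{2}+1}$ rather than $1$ (and requires $b^{2}+1\geq a^{2}$ for this to be a legitimate variance). The same discrepancy is present in the paper's own derivation, where the exponent $-2\pi^{2}\big(\tfrac{b^{2}+1}{a^{2}+b^{2}+1}-d^{2}\big)\nu^{2}$ with $d^{2}=\tfrac{a^{2}}{a^{2}+b^{2}+1}$ is simplified to $-2\pi^{2}\nu^{2}$, which only holds for $a=0$. Your qualitative conclusion---a backward linear Gaussian CANM with $\hat\epsilon$ independent of $Y$ and $\hat N$ always exists---survives with the corrected variances, but you should actually perform the variance accounting rather than assert it, since it is the only substantive computation in the proof and, as it happens, it contradicts the constants as stated.
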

	\begin{proof}
		See Supplementary \ref{sup:cor1} for a proof.
	\end{proof}
	
	\begin{corollary}
		\label{corollary:anm}
		Suppose that there is no unmeasured intermediate noises in CANM, if the solution of Eq. \ref{eq:thm1} exists, then the triple $\displaystyle ( f,p_{X} ,p_{\epsilon})$ must satisfy the differential equation from ANM \cite[Theorem~1]{hoyer2009nonlinear} for all $\displaystyle x,y$ with $\displaystyle \nu ''( y-f( x)) f'( x) \neq 0$:
		\begin{equation}
		\xi ^{\prime \prime \prime } =\xi ^{\prime \prime }\left( -\frac{\nu ^{\prime \prime \prime } f^{\prime }}{\nu ^{\prime \prime }} +\frac{f^{\prime \prime }}{f^{\prime }}\right) -2\nu ^{\prime \prime } f^{\prime \prime } f^{\prime } +\nu ^{\prime } f^{\prime \prime \prime } +\frac{\nu ^{\prime } \nu ^{\prime \prime \prime } f^{\prime \prime } f^{\prime }}{\nu ^{\prime \prime }} -\frac{\nu ^{\prime }\left( f^{\prime \prime }\right)^{2}}{f^{\prime }} ,
		\end{equation}
		where $\displaystyle \nu \coloneq \log p_{\epsilon} ,\ \xi \coloneq \log p_{X}$ 
	\end{corollary}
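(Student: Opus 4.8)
The plan is to show that, once the latent noises are removed, Eq. \ref{eq:thm1} collapses to the classical bidirectional additive-noise condition, so that the claim follows from the differential-equation characterization of \cite[Theorem~1]{hoyer2009nonlinear}. Concretely, if $\mathbf{N}$ (and hence $\hat{\mathbf{N}}$) is empty, then $f(x,\mathbf{n})$ reduces to $f(x)$, the integrals $d\mathbf{n}$ and $d\hat{\mathbf{n}}$ disappear, and the forward/backward models in Eq. \ref{eq:unidentifiable} become the plain ANM pair $Y=f(X)+\epsilon$, $X=g(Y)+\hat{\epsilon}$ with $X,\epsilon$ independent and $Y,\hat{\epsilon}$ independent. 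In that case the numerator of the integrand in Eq. \ref{eq:thm1} is exactly $\int p(x,y)e^{-2\pi i x\cdot\nu}dx$, the Fourier transform in $x$ of the joint density at fixed $y$, while the denominator is $p(y)e^{-2\pi i g(y)\cdot\nu}$; reading Eq. \ref{eq:thm1} as an inverse Fourier transform then shows that ``a solution of Eq. \ref{eq:thm1} exists'' is equivalent to $p(x\mid y)=p_{\hat{\epsilon}}(x-g(y))$ for some function $g$ and some bona fide density $p_{\hat{\epsilon}}$, i.e., to the existence of a backward ANM whose noise is independent of $Y$.

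Granting that reduction, the first step is to write $\pi(x,y):=\log p(x,y)=\xi(x)+\nu(y-f(x))$ from the forward model, with $\nu=\log p_{\epsilon}$ and $\xi=\log p_{X}$ as in the statement. The second step is to translate the existence of the backward ANM into the statement that $\pi$ also admits the form $\hat{\nu}(x-g(y))+\hat{\xi}(y)$; differentiating this once in $x$ shows that $\partial_x\pi$ depends on $(x,y)$ only through $x-g(y)$, so the ratio $\partial_{xy}\pi/\partial_{xx}\pi$ equals $-g'(y)$ and in particular $\partial_x\!\left(\partial_{xy}\pi/\partial_{xx}\pi\right)=0$. The third step is the routine but careful computation: substitute $\pi=\xi(x)+\nu(y-f(x))$, obtain $\partial_{xx}\pi=\xi''-f''\nu'+(f')^2\nu''$ and $\partial_{xy}\pi=-f'\nu''$ (with $\nu$ and its derivatives evaluated at $y-f(x)$), impose $\partial_x\!\left(\partial_{xy}\pi/\partial_{xx}\pi\right)=0$ on the open set where $\nu''(y-f(x))f'(x)\neq 0$, and solve the resulting identity for $\xi'''$; this produces precisely the displayed third-order ODE, which is the ANM differential equation of \cite[Theorem~1]{hoyer2009nonlinear}.

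The only genuinely delicate point is the equivalence asserted in the first paragraph: that the bare existence of the integral in Eq. \ref{eq:thm1} as a legitimate probability density forces a backward ANM, and not merely a backward CANM with degenerate latents. This is where one must use that $\hat{\mathbf{N}}$ is empty by hypothesis and that Theorem \ref{thm:unidentifiable} already supplies the independence of $\hat{\epsilon}$ from $Y$ (equivalently, that the integrand in Eq. \ref{eq:thm1} is $y$-independent), so that $p(x\mid y)=p_{\hat{\epsilon}}(x-g(y))$ really does hold and $g$ is well defined wherever $f'\neq 0$. Once that is in place, everything else is a direct instance of the known ANM argument, with the remaining effort being bookkeeping of the chain rule and verifying that the algebra matches the exact form of the equation in the statement.
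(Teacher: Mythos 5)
Your proposal is correct and follows essentially the same route as the paper: specialize Eq.~\ref{eq:thm1} to the case of no latent noises, invert the Fourier transform to conclude that existence of a solution is equivalent to $p_{\hat{\epsilon}}(x-g(y))=p(x)p_{\epsilon}(y-f(x))/p(y)$, i.e.\ to a backward ANM with $Y$-independent noise, and then apply the differential-equation characterization of \cite[Theorem~1]{hoyer2009nonlinear}. The only difference is that you re-derive that characterization (via $\partial_x\!\left(\partial_{xy}\pi/\partial_{xx}\pi\right)=0$ for $\pi=\xi(x)+\nu(y-f(x))$) rather than citing it, which is harmless.
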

	\begin{proof}
		See Supplementary \ref{sup:cor2} for a proof.
	\end{proof}

	\section{Experiments}\label{sec:exp}

	\subsection{Synthetic Data}
	% 产生数据的方法，变深度，变样本两个实验，然后核密度估计的方式，
	In this section, we design three experiments with known ground truth, with the depth $=\{0,1,2,\textbf{3},4,5\}$, sample size $=\{250,500,1000,2000,\textbf{3000},4000,5000,6000\}$, and with different sample sizes for some fix structures. The default setting is marked in bold. All the experimental results are averaged over 1000 random generated causal pairs generated by the cascade additive noise model. Code for CANM is available online\footnote{\url{https://github.com/DMIRLAB-Group/CANM}}.
	
	To make the synthetic data general enough, in each depth, we randomly generate an additive noise model and then stack it together to obtain the cascade additive noise model. In detail, the cause $(X)$ is sampled from a random Gaussian Mixture model of 3 components $p(x_i|\theta)=\sum_{k=1}^{3} \pi_k \mathcal{N}(x_i|\mu_k,\sigma_k)$ where $\mu_k\sim \mathcal{N}(0,1),\sigma_k \sim Super-Gaussian$. For each layer $x_{t}=f_{t}(x_{t-1})+n_t$ where $n_t\sim \mathcal{N}(0,1)$ and $f_t$ is generated from a cubic spline interpolation using a 6-dimensional grid from $\min (x_{t-1})$ to $\max (x_{t-1})$ as input with respect to 6 random generated points as knots for the interpolation; the generated points are sampled from $\mathcal{N}(0,1)$ and the number of knots is used to control non-linearity of the function. Such generative process follows the instrument given in \cite{prestwich2016causal}.
	
	The following four algorithms are taken as baseline methods: ANM \cite{hoyer2009nonlinear}, CAM \cite{buhlmann2014cam}, IGCI \cite{janzing2012information}, and LiNGAM \cite{shimizu2006linear}. We also improve the implementation for ANM by using the XGBoost \cite{chen2016xgboost} for regression and the Hilbert-Schmidt independence criterion (HSIC) \cite{gretton2008kernel} as the independence test. Therefore, ANM can be evaluated in two ways. First, we compare the HSIC statistic to determine the direction and second, we select the best significance level ($p=0.01$) range from 0.01 to 1 to determine the causal direction. At the same time, the best parameter setting of IGCI is chosen. For the other baseline methods, we use the parameter settings in their original papers. The implementation and the parameter settings of LiNGAM and CAM are based on the CompareCausalNetworks packages in R \cite{CompareCausalNetworks}.
	
	% 分析
	
	\begin{figure}[http]
		
		\centering
		\includegraphics[width=0.4\textwidth]{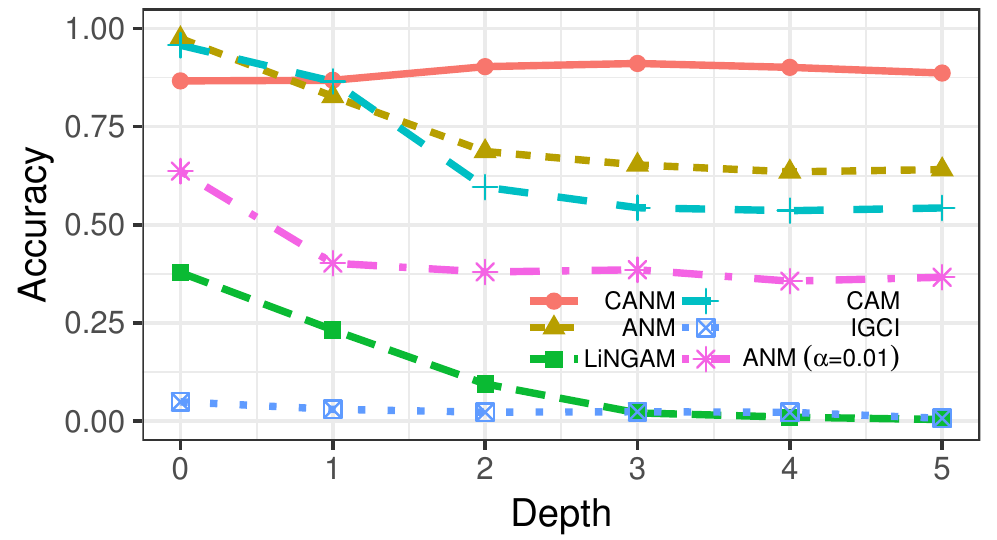}
		
		\caption{Sensitivity to Depth.}
		\label{fig:sensitivity_depth}
		
	\end{figure}
	
	\textbf{Sensitivity to Depth}: Fig. \ref{fig:sensitivity_depth} shows the accuracy with different depths in 3000 samples. Firstly, when the depth is equal to 0 (the original additive noise model), all CANM, ANM, and CAM achieve a high accuracy. Note that CANM still has a similar performance comparing with ANM even though CANM assume that there might exist unmeasured intermediate variables, which demonstrates the robustness of our method. Secondly, as the depth increases, the accuracy of CANM is stable and around 90\% accuracy with a slight decrease, while the performance of the rest methods decreases rapidly as the depth grows. In particular, the ANM with the significance level of 0.01 gives almost random decisions when the cascade structure exists.
	
	\begin{figure}[http]
		\vspace{-10pt}
		\centering
		\includegraphics[width=0.4\textwidth]{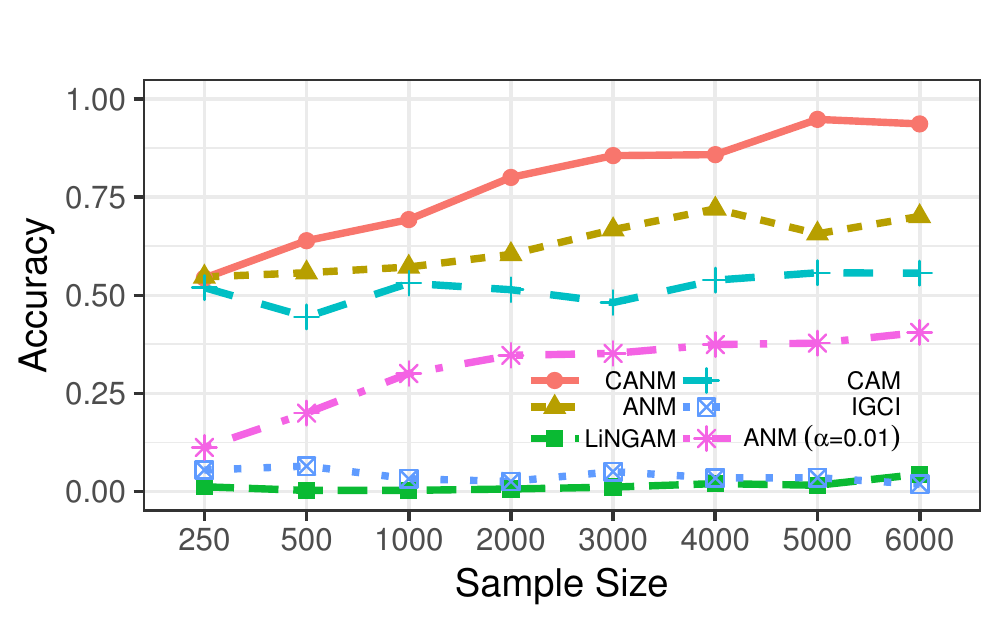}
		\vspace{-8pt}
		\caption{Sensitivity to Sample.}
		\label{fig:sensitivity_sample}
		\vspace{-10pt}
	\end{figure}
	
	\textbf{Sensitivity to Sample Size}: Fig. \ref{fig:sensitivity_sample} shows the accuracy with different sample sizes while the depth is fixed at 3. The result shows that even in the small sample size, CANM still outperforms the other methods. As the sample size increases, the accuracy of CANM grows faster than the other methods. Thus, large samples are beneficial to CANM, because of the variational auto-encoder framework employed in CANM. A similar result also can be observed in ANM and CAM while the other methods are less sensitive to the sample size due to the model restriction. 
	
	\begin{figure}[http]
		\vspace{-5pt}
		\centering
		\includegraphics[width=0.4\textwidth]{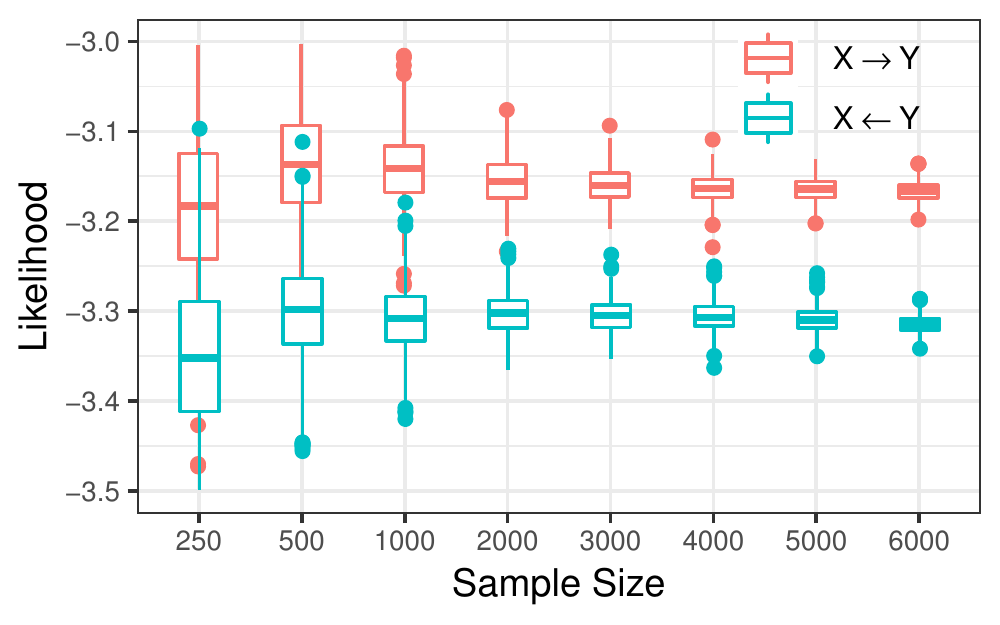}
		\vspace{-5pt}
		\caption{Sensitivity to Sample in a Fixed Structure.}
		\label{fig:fixF_sample}
		\vspace{-5pt}
	\end{figure}
	
	\textbf{Sensitivity to Sample Size in a Fixed Structure}: Fig. \ref{fig:fixF_sample} shows the accuracy with different numbers of samples while we use a fixed causal mechanism, which was randomly generated with depth=3. When the sample size is small, the variance of the likelihood is large; however, the asymmetry in the causal direction is still clear. As the sample size increases, the variance of the likelihood decreases and the accuracy increases, which implies the effectiveness and robustness of CANM as the sample size grows.
	
	\subsection{Real World Data}
	% 这个数据集是energy industry 的数据，包含了温度，电力load，根据常识我们知道
	\textbf{Electricity consumption}: The electricity consumption dataset \cite{prestwich2016causal} has 9504-hour measurements from the energy industry, containing the $hour~of~data$, outside $temperature$ and the $electricity~load$ on the power station. The causal mechanism among the three variables is $hour~of~day \to temperature $ and $temperature \to electricity~load$. The first pair is generally caused by the heating of sunlight and the second pair is base on the fact that the usage of heating or air condition depends on the temperature. We are interested to know whether we can identify the $hour~of~day~(X)$ is the cause of the $electricity~load~(Y)$ and what intermediate variable will be inferred via CANM. 
	
	\begin{figure}[http]
		
		\centering
		\includegraphics[width=0.4\textwidth]{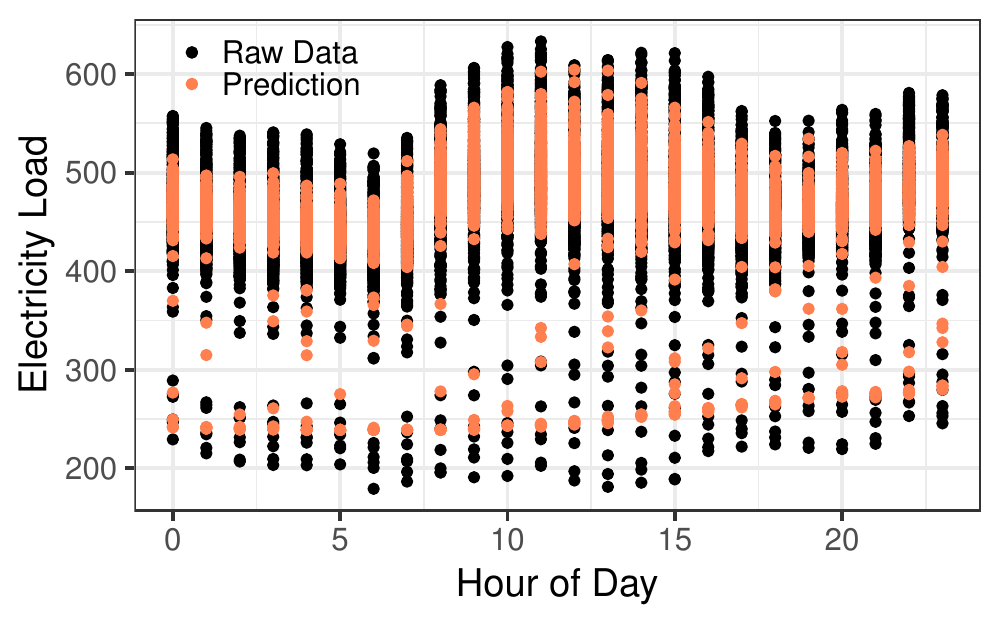}
		\vspace{-5pt}
		\caption{Hour of Day Against Electricity Load.}
		\label{fig:real_pair9496_pred}
	\end{figure}
	
	\begin{figure}[http]
		\vspace{-5pt}
		\centering
		\includegraphics[width=0.4\textwidth]{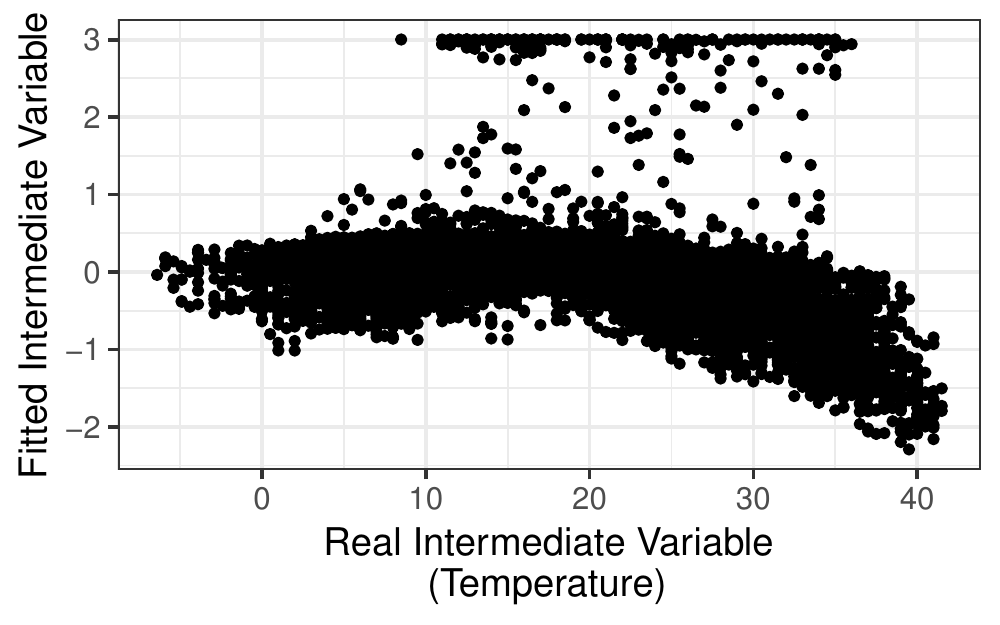}
		\caption{Temperature Against Fitted Intermediate Variable.}
		\label{fig:real_pair9496_z}
		\vspace{-10pt}
	\end{figure}
	
	In general, we successfully identify the correct causal direction with average score $\mathcal{L}_{X\to Y}=-2.62>\mathcal{L}_{Y\to X}=-2.67$ while ANM fails on this pair (the p-value $=0$ on both directions). The prediction of $electricity$ is given in Fig. \ref{fig:real_pair9496_pred}. It is interesting to note that there might exist more than one unmeasured variable, e.g., season, causing a different electricity load at the same hour of day. Such unmeasured variables are successfully captured by CANM as the prediction separating into both upper and lower parts. Furthermore, the intermediate variable inferred by our method has rather high correlation ($\rho=-0.35$) with the temperature as shown in Fig. \ref{fig:real_pair9496_z}, which means that CANM not only recovers the information of the season but also the information of the temperature.
	
	\textbf{Stock Market}: The stock market dataset is collected by T\"ubingen causal effect benchmark (\url{https://webdav.tuebingen.mpg.de/cause-effect/}) as pairs 66-67. It contains the stock return of $Hutchison$, $Cheung~Kong$ and $Sun~Hung~Kai$ with the causal relationship: $Hutchison \rightarrow Cheung~Kong$ and $Cheung~Kong \rightarrow Sun~Hung~Kai$. The reason for the first pair is that Cheung Kong owns about 50\% of Hutchison. For the second pair, Sun Hung Kai Prop., a typical stock in the Hang Seng Property subindex, is believed to depend on the major stock Cheung Kong. Similarly to the previous experiment, we are interested to know whether we can identify the $Hutchison~(X)$ is the cause of the $Sun~Hung~Kai~(Y)$.
	
	\begin{figure}[http]
		\centering
		\includegraphics[width=0.4\textwidth]{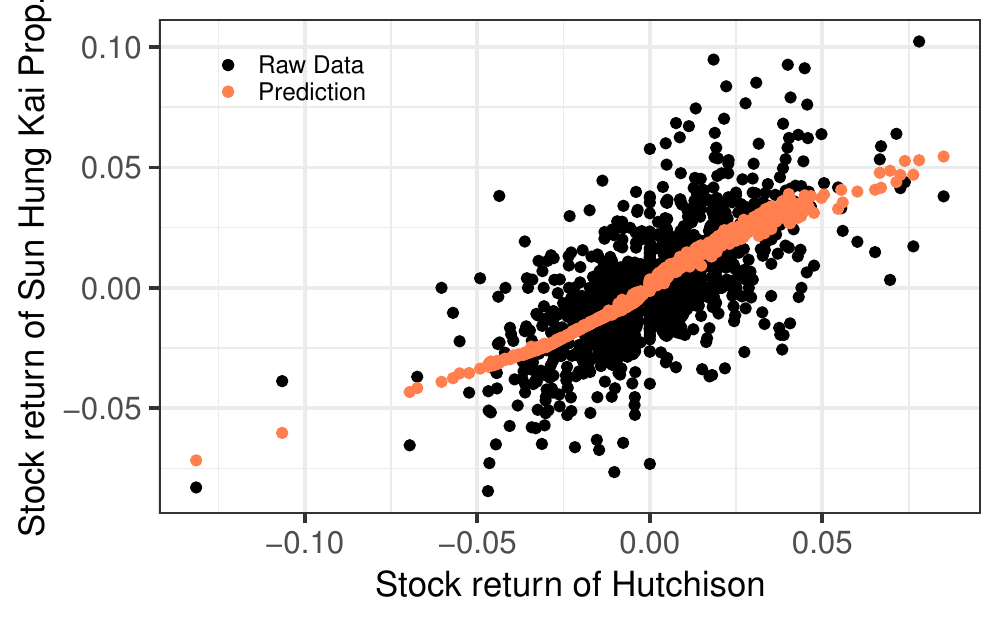}
		\vspace{-5pt}
		\caption{Stock return of Hutchison Against Stock return of Sun Hung Kai Prop.}
		\label{fig:real_pair6667_pred}
		%\vspace{-5pt}
	\end{figure}
	
	\begin{figure}[http]
		\vspace{-5pt}
		\centering
		\includegraphics[width=0.4\textwidth]{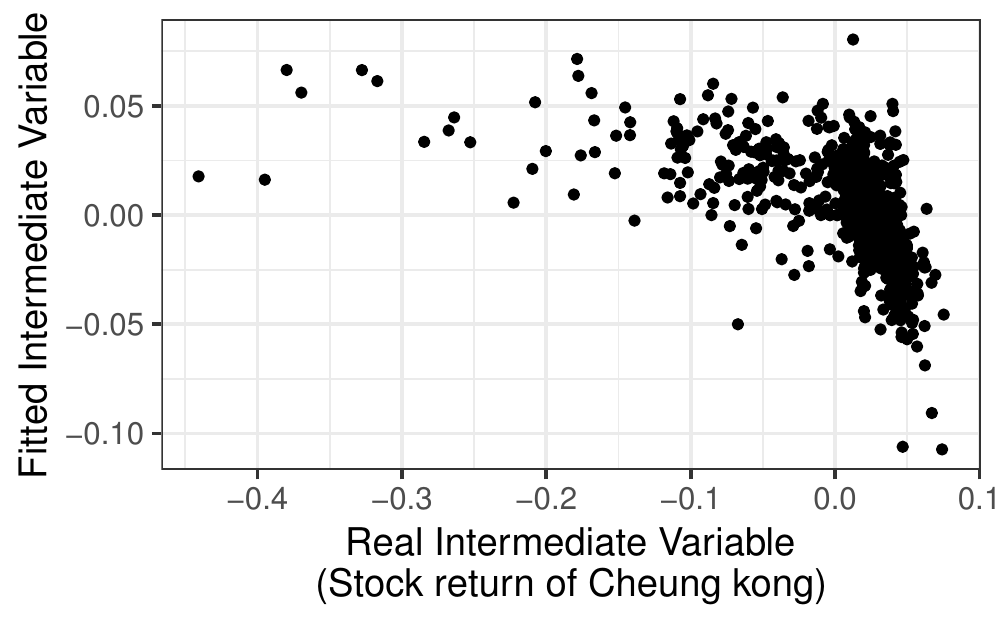}
		\vspace{-5pt}
		\caption{Stock return of Cheung kong Against Fitted Intermediate Variable.}
		\label{fig:real_pair6667_z}
		\vspace{-10pt}
	\end{figure}
	Since these three stocks form a causal chain that $Hutchison \rightarrow Cheung~Kong \rightarrow Sun~Hung~Kai$, using CANM, we successfully identify the indirect causal direction with average score $\mathcal{L}_{X\to Y}=-2.49>\mathcal{L}_{Y\to X}=-2.51$ while ANM fails on this pair (the p-value $=0.006< 0.05$ on the causal direction and p-value $=0.29> 0.05$ on the reverse direction). Fig. \ref{fig:real_pair6667_pred} shows the prediction of the stock return of the $Sun~Hung~Kai$. We also find that the fitted intermediate variable has a high correction ($\rho=-0.54$) with the stock return of $Cheung~Kong$ as shown in Fig. \ref{fig:real_pair6667_z}.
	
	\vspace{-5pt}

	\section{Conclusion}\label{sec:conclu}
	In this paper, we proposed the cascade nonlinear additive noise model, as an extension of the nonlinear additive noise model, to represent indirect causal influences, which result from unmeasured intermediate causal variables. We have demonstrated that, the independence between the noise and cause is still generally helpful to determine causal direction between two variables, as long as the cascade additive noise process holds. We propose to estimate the model as well as the intermediate causal variables with the variational auto-encoder framework, and the produced likelihood indicates the asymmetry between cause and effect. As supported by our theoretical and empirical results, the proposed approach provides an effective method for causal direction determination from data generated by nonlinear, indirect causal relations.
	
	\section*{Acknowledgments}
	%\vspace{-3pt}
	This research was supported in part by NSFC-Guangdong Joint Found (U1501254), Natural Science Foundation of China (61876043), Natural Science Foundation of Guangdong (2014A030306004, 2014A030308008),  Guangdong High-level Personnel of Special Support Program (2015TQ01X140) and Pearl River S\&T Nova Program of Guangzhou (201610010101). KZ would like to acknowledge the support by National Institutes of Health (NIH) under Contract No. NIH-1R01EB022858-01, FAINR01EB022858, NIH-1R01LM012087, NIH-5U54HG008540-02, and FAIN- U54HG008540, by the United States Air Force under Contract No. FA8650-17-C-7715, and by National Science Foundation (NSF) EAGER Grant No. IIS-1829681. The NIH, the U.S. Air Force, and the NSF are not responsible for the views reported here.
	We appreciate the comments from anonymous reviewers, which greatly helped to improve the paper. 
	
	%% The file named.bst is a bibliography style file for BibTeX 0.99c
	\bibliographystyle{named}
	\bibliography{ijcai19_arxiv}
	
	\newpage 
	\onecolumn
	\section*{Supplementary Material}
	
	\setcounter{equation}{0}
	\renewcommand{\theequation}{S.\arabic{equation}}
	\setcounter{figure}{0}
	\renewcommand{\thefigure}{S.\arabic{figure}}
	%\part{}
	%\stepcounter{section}
	\renewcommand{\thesection}{\Alph{section}}
	%\counterwithin*{section}{part}
	\setcounter{section}{0}
	
	\section{The Lower Bound for Cascade Nonlinear Additive Noise Model}\label{sup:elbo}

	\begin{equation*}
	\begin{aligned}
	& \log p(x^{(i)} ,y^{(i)} )\\
	= & \log p(x^{(i)} ,\epsilon ^{(i)} ,\mathbf{n} )-\log p(\mathbf{n} |x^{(i)} ,y^{(i)} )\\
	= & \log\left(\frac{p(x^{(i)} ,\epsilon ^{(i)} ,\mathbf{n} )}{q (\mathbf{n} |x^{(i)} ,y^{(i)} )}\right) -\log\left(\frac{p(\mathbf{n} |x^{(i)} ,y^{(i)} )}{q (\mathbf{n} |x^{(i)} ,y^{(i)} )}\right)\\
	= & \log p(x^{(i)} ,\epsilon ^{(i)} ,\mathbf{n} )-\log q (\mathbf{n} |x^{(i)} ,y^{(i)} )-\log\left(\frac{p(\mathbf{n} |x^{(i)} ,y^{(i)} )}{q (\mathbf{n} |x^{(i)} ,y^{(i)} )}\right)\\
	= & \log p(x^{(i)} ,\epsilon ^{(i)} )+\log p(\mathbf{n} )-\log q (\mathbf{n} |x^{(i)} ,y^{(i)} )-\log\left(\frac{p(\mathbf{n} |x^{(i)} ,y^{(i)} )}{q (\mathbf{n} |x^{(i)} ,y^{(i)} )}\right)\\
	= & \log p\left( \epsilon ^{(i)} =y-f\left(\mathbf{n} ,x^{(i)}\right)\right) +\log p\left( x^{(i)}\right) +\log p(\mathbf{n} )-\log q (\mathbf{n} |x^{(i)} ,y^{(i)} )-\log\left(\frac{p(\mathbf{n} |x^{(i)} ,y^{(i)} )}{q (\mathbf{n} |x^{(i)} ,y^{(i)} )}\right)\\
	= & \log p\left( x^{(i)}\right) +\int q (\mathbf{n} |x^{(i)} ,y^{(i)} )\log p\left( \epsilon ^{(i)} =y^{(i)} -f\left(\mathbf{n} ,x^{(i)}\right)\right) d\mathbf{n}\\
	& +\int q (\mathbf{n} |x^{(i)} ,y^{(i)} )\log\frac{p(\mathbf{n} )}{q (\mathbf{n} |x^{(i)} ,y^{(i)} )} d\mathbf{n} -\int q (\mathbf{n} |x^{(i)} ,y^{(i)} )\log\left(\frac{p(\mathbf{n} |x^{(i)} ,y^{(i)} )}{q (\mathbf{n} |x^{(i)} ,y^{(i)} )}\right) d\mathbf{n}\\
	= & \log p\left( x^{(i)}\right) +E_{\mathbf{n} \sim q (\mathbf{n} |x^{(i)} ,y^{(i)} )}\left[\log p\left( \epsilon ^{(i)} =y^{(i)} -f\left(\mathbf{n} ,x^{(i)}\right)\right)\right] -KL(q (\mathbf{n} |x^{(i)} ,y^{(i)} )\| p(\mathbf{n} ))+KL(q (\mathbf{n} |x^{(i)} ,y^{(i)} )\| p(\mathbf{n} |x^{(i)} ,y^{(i)} ))\\
	\geqslant  & \log p\left( x^{(i)}\right) +E_{\mathbf{n} \sim q (\mathbf{n} |x^{(i)} ,y^{(i)} )}\left[\log p\left( \epsilon ^{(i)} =y^{(i)} -f\left(\mathbf{n} ,x^{(i)}\right)\right)\right] -KL(q (\mathbf{n} |x^{(i)} ,y^{(i)} )\| p(\mathbf{n} ))
	\end{aligned}
	\end{equation*}
	\section{Proof of Theorem 1}\label{sup:thm1}
	\begin{theorem}
		\label{sup:thm:unidentifiable}
		Let $X\to Y$ follow the cascade additive noise model, while there exists a backward model following the same form, i.e.
		\begin{equation}
		\label{sup:eq:unidentifiable}
		\begin{aligned}[c]
		Y=f( X,\mathbf{N}) +\epsilon,\\
		X=g( Y,\mathbf{\hat{N}}) +\hat{\epsilon}, 
		\end{aligned}
		\qquad
		\begin{aligned}[c]
		X,\mathbf{N}, \textrm{ and }\epsilon \textrm{~are independent}, \\
		Y,\mathbf{\hat{N}}, \textrm{ and }\hat{\epsilon } \textrm{~are independent},
		\end{aligned}
		\end{equation}
		then the noise distribution of the reverse direction $ p_{\hat{\epsilon}}$ must be
		\begin{equation}
		\label{sup:eq:thm1}
		p_{\hat{\epsilon }}\left(\hat{\epsilon }\right)\! =\!\! \int \! e^{2\pi i\hat{\epsilon } \cdot \nu }\frac{\int \!\! \int p( x) p(\mathbf{n}) p_{\epsilon }( y-f( x,\mathbf{n})) e^{-2\pi ix\cdot \nu } d\mathbf{n} dx}{p( y)\int p\left(\hat{\mathbf{n}}\right) e^{-2\pi ig\left( y,\hat{\mathbf{n}}\right) \cdot \nu } d\hat{\mathbf{n}}} d\nu ,
		\end{equation}
		where $f, g$ denote the function implied by the cascade process. 
	\end{theorem}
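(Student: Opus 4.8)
The plan is to route through the conditional density $p(x\mid y)$, exploiting the additive‑noise structure on each side, and then to turn the resulting convolutions into products via the Fourier transform so that the characteristic function of $\hat\epsilon$ can be solved for explicitly and then inverted.

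First I would use the hypothetical backward model. Because $X=g(Y,\hat{\mathbf N})+\hat\epsilon$ with $Y$, $\hat{\mathbf N}$ and $\hat\epsilon$ mutually independent, conditioning on $Y=y$ leaves $\hat{\mathbf N}$ and $\hat\epsilon$ with their (independent) marginal laws, so $p(x\mid y)=\int p(\hat{\mathbf n})\,p_{\hat\epsilon}\bigl(x-g(y,\hat{\mathbf n})\bigr)\,d\hat{\mathbf n}$, a convolution in $x$ parametrized by $y$. Applying the Fourier transform in $x$ together with the shift rule $\int p_{\hat\epsilon}(x-a)e^{-2\pi i x\cdot\nu}\,dx=e^{-2\pi i a\cdot\nu}\,\widehat{p_{\hat\epsilon}}(\nu)$ turns this into the product
\[
\int p(x\mid y)\,e^{-2\pi i x\cdot\nu}\,dx=\widehat{p_{\hat\epsilon}}(\nu)\int p(\hat{\mathbf n})\,e^{-2\pi i g(y,\hat{\mathbf n})\cdot\nu}\,d\hat{\mathbf n},
\]
where $\widehat{p_{\hat\epsilon}}$ denotes the Fourier transform of $p_{\hat\epsilon}$. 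Solving for $\widehat{p_{\hat\epsilon}}(\nu)$ amounts to dividing the left side by the $\hat{\mathbf n}$‑integral.

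Next I would bring in the forward model to rewrite $p(x\mid y)=p(x,y)/p(y)$. Since $Y=f(X,\mathbf N)+\epsilon$ with $X$, $\mathbf N$, $\epsilon$ mutually independent, marginalizing over $\mathbf N$ gives $p(x,y)=\int p(x)\,p(\mathbf n)\,p_\epsilon\bigl(y-f(x,\mathbf n)\bigr)\,d\mathbf n$. Substituting this into the expression for $\widehat{p_{\hat\epsilon}}(\nu)$ obtained above yields
\[
\widehat{p_{\hat\epsilon}}(\nu)=\frac{\int\!\!\int p(x)\,p(\mathbf n)\,p_\epsilon\bigl(y-f(x,\mathbf n)\bigr)\,e^{-2\pi i x\cdot\nu}\,d\mathbf n\,dx}{p(y)\int p(\hat{\mathbf n})\,e^{-2\pi i g(y,\hat{\mathbf n})\cdot\nu}\,d\hat{\mathbf n}},
\]
and then the Fourier inversion theorem $p_{\hat\epsilon}(\hat\epsilon)=\int \widehat{p_{\hat\epsilon}}(\nu)\,e^{2\pi i\hat\epsilon\cdot\nu}\,d\nu$ gives exactly the claimed formula.

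The delicate part is not any single computation but the bookkeeping of validity: one must assume enough regularity — integrability of all the densities, sufficiently fast decay of the characteristic functions so the inversion integral converges, and non‑vanishing of the denominator $\int p(\hat{\mathbf n})e^{-2\pi i g(y,\hat{\mathbf n})\cdot\nu}\,d\hat{\mathbf n}$ — for the Fourier manipulations and the final division to be legitimate; these are the ``suitable technical conditions'' referred to in the statement, and I expect them to be the main obstacle to a fully rigorous argument. A further conceptual subtlety worth flagging, because it drives the discussion following the theorem, is that although the right‑hand side is written with a free variable $y$, the left‑hand side $p_{\hat\epsilon}(\hat\epsilon)$ does not depend on $y$; hence existence of a genuine backward CANM forces the displayed ratio to be $y$‑independent, which is precisely the restrictive constraint that generically fails and thereby yields identifiability.
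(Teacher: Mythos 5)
Your proposal is correct and follows essentially the same route as the paper's own proof: express $p(x\mid y)$ as a convolution via the backward model, apply the Fourier transform in $x$ with the shift rule to factor out $\widehat{p_{\hat\epsilon}}(\nu)$, divide, invert, and substitute the forward-model expression for $p(x\mid y)$ via Bayes' rule. Your closing remarks on the needed regularity conditions and on the $y$-independence constraint are sound additions but do not change the argument.
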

	\newenvironment{proof-sketch}{\noindent{ \textit{Sketch of Proof:}}\hspace*{1em}}{\qed\bigskip}
	\begin{proof-sketch}
		Based on the derivation of the marginal log-likelihood at Eq. \ref{eq:joint likelihood} in Section \ref{sec: model}, if Eq. \ref{sup:eq:unidentifiable} holds, we have
		
		\begin{equation} 
		\begin{aligned}
		p( y|x) =\int p(\mathbf{n}) p_{\epsilon }(\epsilon= y-f( x,\mathbf{n})) d\mathbf{n},\\
		p( x|y) =\int p(\mathbf{\hat{n}}) p_{\hat{\epsilon }}(\hat{\epsilon}= x-g\left( y,\mathbf{\hat{n}}\right)) d\hat{\mathbf{n}}.
		\end{aligned}
		\end{equation}
		Applying Fourier transform to $\displaystyle p( x|y)$, we obtain
		\begin{equation}
		\label{eq:fourier1}
		\begin{aligned}
		\mathcal{F}( \nu ) & =\int p( x|y) e^{-2\pi ix\cdot \nu } dx\\
		& =\int p\left(\mathbf{\hat{n}}\right)\int p_{\hat{\epsilon }}\left( x-g\left( y,\mathbf{\hat{n}}\right)\right) e^{-2\pi ix\cdot \nu } dxd\hat{\mathbf{n}}.
		\end{aligned}
		\end{equation}
		Since $\displaystyle \hat{\epsilon } =x-g\left( y,\hat{\mathbf{n}}\right)$, we have $\displaystyle d\hat{\epsilon } =dx$. By making use of the convolution theorem, the above equation can be rewritten as follows,
		\begin{equation}
		\label{eq:fourier2}
		\begin{aligned}
		\mathcal{F}( \nu ) & =\int p\left(\mathbf{\hat{n}}\right)\int p_{\hat{\epsilon }}\left(\hat{\epsilon }\right) e^{-2\pi i\left(\hat{\epsilon } +g\left( y,\hat{\mathbf{n}}\right)\right) \cdot \nu } d\hat{\epsilon } d\hat{\mathbf{n}}\\
		& =\int p\left(\hat{\mathbf{n}}\right) e^{-2\pi ig\left( y,\hat{n}\right) \cdot \nu } d\mathbf{\hat{n}}\int p_{\hat{\epsilon }}\left(\hat{\epsilon }\right) e^{-2\pi i\hat{\epsilon } \cdot \nu } d\hat{\epsilon }.
		\end{aligned}
		\end{equation}
		Combing Eq. \ref{eq:fourier1} and \ref{eq:fourier2}, we have
		\begin{equation}
		\int p_{\hat{\epsilon }}\left(\hat{\epsilon }\right) e^{-2\pi i\hat{\epsilon } \cdot \nu } d\hat{\epsilon } =\frac{\int p( x|y) e^{-2\pi ix\cdot \nu } dx}{\int p\left(\hat{\mathbf{n}}\right) e^{-2\pi ig\left( y,\hat{\mathbf{n}}\right) \cdot \nu } d\hat{\mathbf{n}}}.
		\end{equation}
		Then, applying the inverse Fourier transform, we conclude
		\begin{equation}
		p_{\hat{\epsilon }}\left(\hat{\epsilon }\right) =\int e^{2\pi i\hat{\epsilon } \cdot \nu }\frac{\int p( x|y) e^{-2\pi ix\cdot \nu } dx}{\int p\left(\hat{\mathbf{n}}\right) e^{-2\pi ig\left( y,\hat{\mathbf{n}}\right) \cdot \nu } d\hat{\mathbf{n}}} d\nu .
		\end{equation}
		Based on  Bayes' theorem, $\displaystyle p( x|y) =\frac{p( x) p( y|x)}{p( y)} =\frac{p( x)\int p(\mathbf{n}) p_{\epsilon }( y-f( x,\mathbf{n})) d\mathbf{n}}{p( y)}$, and we further have 
		\begin{equation*}
		p_{\hat{\epsilon }}\left(\hat{\epsilon }\right)\! =\!\! \int \! e^{2\pi i\hat{\epsilon } \cdot \nu }\frac{\int \!\! \int p( x) p(\mathbf{n}) p_{\epsilon }( y-f( x,\mathbf{n})) e^{-2\pi ix\cdot \nu } d\mathbf{n} dx}{p( y)\int p\left(\hat{\mathbf{n}}\right) e^{-2\pi ig\left( y,\hat{\mathbf{n}}\right) \cdot \nu } d\hat{\mathbf{n}}} d\nu .
		\end{equation*}
	\end{proof-sketch}
	
	\section{Proof of Corollary 1}\label{sup:cor1}
	
	\begin{corollary}
		\label{sup:corollary:linear}
		Assume that CANM is linear Gaussian, i.e., 
		\begin{equation*}
		Y=aX+bN+\epsilon ,
		\end{equation*}
		where $X,N,\epsilon \sim \mathcal{N}(0,1)$, then their exist a backward CANM
		\begin{equation*}
		X=\frac{a}{a^{2} +b^{2} +1}Y+\frac{a}{\sqrt{a^{2} +b^{2} +1}}\hat{N}+\hat{\epsilon},
		\end{equation*} 
		where $\hat{N}, \hat{\epsilon} \sim \mathcal{N}(0,1)$ and $\hat{\epsilon }$ is independent of $Y$ and $\hat{N}$.
		
	\end{corollary}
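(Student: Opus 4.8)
The plan is to use the fact that in the linear Gaussian case the pair $(X,Y)$ is jointly Gaussian, so the backward conditional $p(x\mid y)$ is itself Gaussian with a mean that is linear in $y$ and a variance that does not depend on $y$; this is precisely the structure that an additive‑noise, hence cascade additive‑noise, model in the direction $Y\to X$ requires. First I would record the second moments $\mathrm{Var}(X)=1$, $\mathrm{Var}(Y)=a^{2}+b^{2}+1$ and $\mathrm{Cov}(X,Y)=a$, from which the least‑squares predictor is $E[X\mid Y]=\frac{a}{a^{2}+b^{2}+1}\,Y$ and the residual $R:=X-\frac{a}{a^{2}+b^{2}+1}Y$ is, by joint Gaussianity, a mean‑zero Gaussian of variance $\frac{b^{2}+1}{a^{2}+b^{2}+1}$ that is independent of $Y$. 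This already fixes the coefficient of $Y$ claimed in the corollary.

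Next I would realize $R$ as the output of a one‑step backward cascade driven by two independent standard Gaussians. Writing the backward chain as a latent $\hat Z=\hat f_{1}(Y)+\hat N$ followed by $X=\hat f_{2}(\hat Z)+\hat\epsilon$ with $\hat f_{1},\hat f_{2}$ linear, the noise part of $X$ takes the form $c\,\hat N+\hat\epsilon$, where the scalar $c$ comes from the slope of the output map $\hat f_{2}$ and $\hat N,\hat\epsilon$ are independent and independent of $Y$. Matching $c\,\hat N+\hat\epsilon$ to $R$ fixes $c=\frac{a}{\sqrt{a^{2}+b^{2}+1}}$ together with the law of $\hat\epsilon$; I would then verify (i) that $Y,\hat N,\hat\epsilon$ can indeed be taken mutually independent with the stated marginals while still reproducing the true joint law of $(X,Y)$, and (ii) that $\hat\epsilon$ is uncorrelated with, and therefore (by Gaussianity) independent of, both $Y$ and $\hat N$ — the latter being what the corollary asserts.

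An equivalent route, and the one tying this corollary directly to the general result, is to specialize Theorem~\ref{thm:unidentifiable}: substitute $f(x,\mathbf{n})=ax+bn$ and $g(y,\hat{\mathbf{n}})=\frac{a}{a^{2}+b^{2}+1}y+\frac{a}{\sqrt{a^{2}+b^{2}+1}}\hat n$ into Eq.~\ref{eq:thm1}. Every integral there becomes Gaussian: $\int p(x)e^{-2\pi i x\nu}dx$ and $\int p(\hat{\mathbf{n}})e^{-2\pi i g(y,\hat{\mathbf{n}})\nu}d\hat{\mathbf{n}}$ are characteristic functions of Gaussians, the inner $\mathbf{n}$‑integral in the numerator is a Gaussian convolution, and $p(y)$ is the $\mathcal{N}(0,a^{2}+b^{2}+1)$ density. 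Once these are evaluated, the $y$‑dependent exponential factors coming from the numerator and from $g$ cancel against $p(y)$, so $\int p_{\hat\epsilon}(\hat\epsilon)e^{-2\pi i\hat\epsilon\nu}d\hat\epsilon$ equals the characteristic function of a centered Gaussian with no residual dependence on $y$; inverting the Fourier transform then shows $p_{\hat\epsilon}$ is a Gaussian density, hence $\hat\epsilon$ is independent of $Y$, and $\hat\epsilon\perp\hat N$ again reduces to a covariance computation.

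The step I expect to be the real obstacle is precisely this covariance bookkeeping in the middle: one must exhibit an honest joint distribution of $(Y,\hat N,\hat\epsilon)$ — equivalently a coupling with the original $(X,N,\epsilon)$ — under which the identity $X=\frac{a}{a^{2}+b^{2}+1}Y+\frac{a}{\sqrt{a^{2}+b^{2}+1}}\hat N+\hat\epsilon$ holds and all three of $Y,\hat N,\hat\epsilon$ are mutually independent with the prescribed marginals; making every cross‑covariance vanish at once, rather than merely getting the marginal variance of $X$ right, is where the computation has to be done with care.
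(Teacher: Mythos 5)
Your proposal is essentially correct and in fact contains two routes. Your second route (substituting the linear-Gaussian forms into Eq.~\ref{eq:thm1}, evaluating the Gaussian Fourier transforms, and checking that the $y$-dependent phase factors cancel) is exactly the paper's proof. Your first route is genuinely different and more elementary: it bypasses Theorem~\ref{thm:unidentifiable} entirely by noting that $(X,Y)$ is jointly Gaussian, so $X\mid Y$ is Gaussian with linear mean $\frac{a}{a^{2}+b^{2}+1}Y$ and $Y$-independent residual $R$ of variance $\frac{b^{2}+1}{a^{2}+b^{2}+1}$, which can then be split into two independent Gaussian summands. This buys transparency: the backward coefficients fall out of ordinary regression identities rather than from cancellations inside an inverse Fourier integral, and the independence of the residual from $Y$ is immediate from joint Gaussianity rather than something to be read off the characteristic function.

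The one place you should not defer the computation is the variance bookkeeping you flag at the end, because it is where the stated parameterization actually bites. Your own route forces
\begin{equation*}
\operatorname{Var}(\hat\epsilon)\;=\;\operatorname{Var}(R)-c^{2}\;=\;\frac{b^{2}+1}{a^{2}+b^{2}+1}-\frac{a^{2}}{a^{2}+b^{2}+1}\;=\;\frac{b^{2}+1-a^{2}}{a^{2}+b^{2}+1},
\end{equation*}
which is not $1$ (one can check consistency: $c^{2}\operatorname{Var}(Y)+d^{2}+\operatorname{Var}(\hat\epsilon)=\frac{a^{2}}{a^{2}+b^{2}+1}+\frac{a^{2}}{a^{2}+b^{2}+1}+\frac{b^{2}+1-a^{2}}{a^{2}+b^{2}+1}=1=\operatorname{Var}(X)$, whereas taking $\hat\epsilon\sim\mathcal{N}(0,1)$ would give $\operatorname{Var}(X)=1+\frac{2a^{2}}{a^{2}+b^{2}+1}\neq 1$). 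So with the prescribed coefficient $\frac{a}{\sqrt{a^{2}+b^{2}+1}}$ on $\hat N$, the noise $\hat\epsilon$ must have variance $\frac{b^{2}+1-a^{2}}{a^{2}+b^{2}+1}$, and the decomposition only exists when $a^{2}\leq b^{2}+1$; this is a substantive positivity constraint, not mere bookkeeping. (The paper's own Fourier computation reaches $e^{-2\pi^{2}\frac{b^{2}+1-a^{2}}{a^{2}+b^{2}+1}\nu^{2}}$ at the last step and simplifies it to $e^{-2\pi^{2}\nu^{2}}$ only by an algebraic slip, so your elementary route, carried out carefully, is the more reliable of the two here.) Everything else in your plan --- the independence of $\hat\epsilon$ from $Y$ via joint Gaussianity and from $\hat N$ by construction --- goes through as you describe.
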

	
	\begin{proof}
		Based on Theorem 1, the noise distribution on the reverse direction can be expressed as
		\begin{equation}
		p_{\hat{\epsilon }}\left(\hat{\epsilon }\right) =\int e^{2\pi i\hat{\epsilon } \cdot \nu }\frac{\int p( x|y) e^{-2\pi ix\cdot \nu } dx}{\int p\left(\hat{{n}}\right) e^{-2\pi ig\left( y,\hat{{n}}\right) \cdot \nu } d\hat{{n}}} d\nu .
		\end{equation}
		Based on the Bayes' theorem, $\displaystyle p( x|y) =\frac{p( x) p( y|x)}{p( y)} =\frac{p( x) p_{\tilde{\epsilon} }( y-ax)} {p( y)}$, where $\tilde{\epsilon} \sim \mathcal{N}(0,b^2+1)$ is the distribution of the $\tilde{\epsilon}=bn+\epsilon$. Without loss of generosity, let $g(y,\hat{n})=cy+d\hat{n}$, we have
		\begin{equation*}
		p_{\hat{\epsilon }}\left(\hat{\epsilon }\right)  =\int e^{2\pi i\hat{\epsilon } \cdot \nu }\frac{\int p_{\tilde{\epsilon}}( y-ax) p( x) e^{-2\pi ix\cdot \nu } dx}{p( y)\int p\left(\hat{n}\right) e^{-2\pi i\left( cy+d\hat{n}\right) \cdot \nu } d\hat{n}} d\nu.
		\end{equation*}
		The following derivation using the fact that the Fourier transform of the Gaussian distribution is
		\begin{equation*}
		\mathcal{F}_{x}\left[{\displaystyle \frac{1}{\sqrt{2\pi \sigma ^{2}}} e^{-\frac{1}{2\sigma ^{2}}( x-\mu )^{2}}}\right] (\nu )=e^{-2\pi i\mu \cdot \nu } e^{-2\pi ^{2} \sigma ^{2} \cdot \nu ^{2}},
		\end{equation*}
		then we have
		\begin{equation*}
		\begin{aligned}
		p_{\hat{\epsilon }}\left(\hat{\epsilon }\right) & =\int e^{2\pi i\hat{\epsilon } \cdot \nu }\frac{\int p_{\tilde{\epsilon}}(y-ax)p(x)e^{-2\pi ix\cdot \nu } dx}{p(y)\int p\left(\hat{n}\right) e^{-2\pi i\left( cy+d\hat{n}\right) \cdot \nu } d\hat{n}} d\nu \\
		& =\int e^{2\pi i\hat{\epsilon } \cdot \nu }\frac{\int {\displaystyle \frac{1}{\sqrt{2\pi \left( b^{2} +1\right)}} e^{-\frac{(y-ax)^{2}}{2\left( b^{2} +1\right)}}\frac{1}{\sqrt{2\pi }} e^{-\frac{x^{2}}{2}}} e^{-2\pi ix\cdot \nu } dx}{{\displaystyle \frac{1}{\sqrt{2\pi \left( a^{2} +b^{2} +1\right)}} e^{-\frac{(y-ax)^{2}}{2\left( a^{2} +b^{2} +1\right)}}} e^{-2\pi icy\cdot \nu }\int p\left(\hat{n}\right) e^{-2\pi id\hat{n} \cdot \nu } d\hat{n}} d\nu \\
		& =\int e^{2\pi i\hat{\epsilon } \cdot \nu }\frac{\int {\displaystyle \frac{\sqrt{2\pi \left( a^{2} +b^{2} +1\right)}}{\sqrt{2\pi }\sqrt{2\pi \left( b^{2} +1\right)}} e^{-\frac{(y-ax)^{2}}{2\left( b^{2} +1\right)} -\frac{x^{2}}{2} +\frac{y^{2}}{2\left( a^{2} +b^{2} +1\right)}}} e^{-2\pi ix\cdot \nu } dx}{e^{-2\pi icy\cdot \nu }\int p\left(\hat{n}\right) e^{-2\pi id\hat{n} \cdot \nu } d\hat{n}} d\nu \\
		& =\int e^{2\pi i\hat{\epsilon } \cdot \nu }\frac{\int {\displaystyle \frac{\sqrt{\left( a^{2} +b^{2} +1\right)}}{\sqrt{2\pi \left( b^{2} +1\right)}} e^{-\left( a^{2} +b^{2} +1\right) x^{2} -a^{2} y^{2} +2axy\left( a^{2} +b^{2} +1\right)}} e^{-2\pi ix\cdot \nu } dx}{e^{-2\pi icy\cdot \nu }\int p\left(\hat{n}\right) e^{-2\pi id\hat{n} \cdot \nu } d\hat{n}} d\nu \\
		& =\int e^{2\pi i\hat{\epsilon } \cdot \nu }\frac{\int {\displaystyle \frac{\sqrt{\left( a^{2} +b^{2} +1\right)}}{\sqrt{2\pi \left( b^{2} +1\right)}} e^{-\frac{\left(\left( a^{2} +b^{2} +1\right) x-ay\right)^{2}}{2\left( b^{2} +1\right)\left( a^{2} +b^{2} +1\right)}}} e^{-2\pi ix\cdot \nu } dx}{e^{-2\pi icy\cdot \nu }\int p\left(\hat{n}\right) e^{-2\pi id\hat{n} \cdot \nu } d\hat{n}} d\nu \\
		& =\int e^{2\pi i\hat{\epsilon } \cdot \nu }\frac{\int {\displaystyle \frac{1}{\sqrt{2\pi \left( b^{2} +1\right) /\left( a^{2} +b^{2} +1\right)}} e^{-\frac{\left( x-\frac{a}{a^{2} +b^{2} +1} y\right)^{2}}{2\left( b^{2} +1\right) /\left( a^{2} +b^{2} +1\right)}}} e^{-2\pi i\left( x-\frac{a}{a^{2} +b^{2} +1} y\right) \cdot \nu } e^{-2\pi i\frac{a}{a^{2} +b^{2} +1} y\cdot \nu } dx}{e^{-2\pi icy\cdot \nu }\int p\left(\hat{n}\right) e^{-2\pi id\hat{n} \cdot \nu } d\hat{n}} d\nu \\
		& =\int e^{2\pi i\hat{\epsilon } \cdot \nu }\frac{e^{-2\pi i\frac{a}{a^{2} +b^{2} +1} y\cdot \nu }{\displaystyle e^{-2\pi ^{2}\left( b^{2} +1\right) /\left( a^{2} +b^{2} +1\right) \cdot \nu ^{2}}}}{e^{-2\pi icy\cdot \nu }\int p\left(\hat{n}\right) e^{-2\pi id\hat{n} \cdot \nu } d\hat{n}} d\nu \\
		& =\int e^{2\pi i\hat{\epsilon } \cdot \nu }\frac{{\displaystyle e^{-2\pi i\frac{a}{a^{2} +b^{2} +1} y\cdot \nu } e^{-2\pi ^{2}\left( b^{2} +1\right) /\left( a^{2} +b^{2} +1\right) \cdot \nu ^{2}}}}{e^{-2\pi icy\cdot \nu }\int {\displaystyle \frac{1}{\sqrt{2\pi d^{2}}} e^{-\frac{\left( d\hat{n}\right)^{2}}{2d^{2}}}} e^{-2\pi id\hat{n} \cdot \nu } d\left( d\hat{n}\right)} d\nu \\
		& =\int e^{2\pi i\hat{\epsilon } \cdot \nu }\frac{{\displaystyle e^{-2\pi i\frac{a}{a^{2} +b^{2} +1} y\cdot \nu } e^{-2\pi ^{2}\left( b^{2} +1\right) /\left( a^{2} +b^{2} +1\right) \cdot \nu ^{2}}}}{e^{-2\pi icy\cdot \nu } e^{-2\pi ^{2} d^{2} \cdot \nu ^{2}}} d\nu .
		\end{aligned}
		\end{equation*}
		
		Let $\displaystyle c=\frac{a}{a^{2} +b^{2} +1} ,d^{2} =\frac{a^{2}}{a^{2} +b^{2} +1}$, we obtain
		\begin{equation*}
		\begin{aligned}
		& \int e^{2\pi i\hat{\epsilon } \cdot \nu }\frac{{\displaystyle e^{-2\pi i\frac{a}{a^{2} +b^{2} +1} y\cdot \nu } e^{-2\pi ^{2}\left( b^{2} +1\right) /\left( a^{2} +b^{2} +1\right) \cdot \nu ^{2}}}}{e^{-2\pi icy\cdot \nu } e^{-2\pi ^{2} d^{2} \cdot \nu ^{2}}} d\nu \\
		= & \int e^{2\pi i\hat{\epsilon } \cdot \nu -2\pi ^{2} \cdot \nu ^{2}} d\nu \\
		= & \int e^{-\left(\sqrt{2} \pi \nu -\frac{i\hat{\epsilon }}{\sqrt{2}}\right)^{2} -\frac{\hat{\epsilon }^{2}}{2}} d\nu \\
		= & \frac{1}{\sqrt{2\pi }} e^{-\frac{\hat{\epsilon }^{2}}{2}}.
		\end{aligned}
		\end{equation*}
		Thus, we have $p_{\hat{\epsilon }}\left(\hat{\epsilon }\right)=\frac{1}{\sqrt{2\pi }} e^{-\frac{\hat{\epsilon }^{2}}{2}}$, which is a Gaussian distribution and independent of $y,\hat{\mathbf{n}}$.
	\end{proof}
	\section{Proof of Corollary 2}\label{sup:cor2}
	\begin{corollary}
		\label{sup:corollary:anm}
		Suppose that there is no unmeasured intermediate noises in CANM, if the solution of Eq. \ref{eq:thm1} exists, then the triple $\displaystyle ( f,p_{X} ,p_{\epsilon})$ must satisfy the differential equation from ANM  \cite[Theorem~1]{hoyer2009nonlinear} for all $\displaystyle x,y$ with $\displaystyle \nu ''( y-f( x)) f'( x) \neq 0$:
		\begin{equation}
		\xi ^{\prime \prime \prime } =\xi ^{\prime \prime }\left( -\frac{\nu ^{\prime \prime \prime } f^{\prime }}{\nu ^{\prime \prime }} +\frac{f^{\prime \prime }}{f^{\prime }}\right) -2\nu ^{\prime \prime } f^{\prime \prime } f^{\prime } +\nu ^{\prime } f^{\prime \prime \prime } +\frac{\nu ^{\prime } \nu ^{\prime \prime \prime } f^{\prime \prime } f^{\prime }}{\nu ^{\prime \prime }} -\frac{\nu ^{\prime }\left( f^{\prime \prime }\right)^{2}}{f^{\prime }} ,
		\end{equation}
		where $\displaystyle \nu \coloneq \log p_{\epsilon} ,\ \xi \coloneq \log p_{X}$ 
	\end{corollary}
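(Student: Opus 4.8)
The plan is to show that, once the intermediate noises are stripped out, Corollary~\ref{corollary:anm} is nothing but the classical ANM identifiability condition of \cite{hoyer2009nonlinear}, so the real content is a careful reduction of Eq.~(\ref{eq:thm1}) followed by the standard differential‑equation bookkeeping.

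First I would specialize Theorem~\ref{thm:unidentifiable} to the case where both $\mathbf{N}$ and $\hat{\mathbf{N}}$ are empty. Then $f(x,\mathbf n)$ becomes $f(x)$ and $g(y,\hat{\mathbf n})$ becomes $g(y)$, the integrals $\int p(\mathbf n)(\cdots)\,d\mathbf n$ and $\int p(\hat{\mathbf n})(\cdots)\,d\hat{\mathbf n}$ collapse, so the numerator of Eq.~(\ref{eq:thm1}) is just the Fourier transform in $x$ of $p(x)\,p_{\epsilon}(y-f(x))=p(y)\,p(x\mid y)$ (Bayes), while the denominator is $p(y)\,e^{-2\pi i g(y)\nu}$. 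Cancelling $p(y)$ and applying the inverse transform together with the shift theorem, Eq.~(\ref{eq:thm1}) reduces to $p_{\hat\epsilon}(\hat\epsilon)=p\bigl(x=\hat\epsilon+g(y)\mid y\bigr)$; hence a solution exists exactly when $p(x\mid y)=p_{\hat\epsilon}(x-g(y))$ for some density $p_{\hat\epsilon}$ that does not depend on $y$ — precisely the statement that the data also admit a backward ANM $X=g(Y)+\hat\epsilon$ with $\hat\epsilon\Vbar Y$. (Equivalently, for empty $\mathbf N$ the CANM is literally the ANM $Y=f(X)+\epsilon$, so this step could also be phrased purely at the level of models, without touching the Fourier representation.)

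Next I would turn this factorization constraint into the differential equation. Write $\pi(x,y):=\log p(x,y)$. From the forward model, $\pi=\xi(x)+\nu(y-f(x))$ with $\xi:=\log p_X$, $\nu:=\log p_\epsilon$, so $\partial_x\partial_y\pi=-f'(x)\,\nu''(y-f(x))$ and $\partial_x^2\pi=\xi''(x)-f''(x)\,\nu'(y-f(x))+f'(x)^2\,\nu''(y-f(x))$. From the backward factorization $\pi=\log p_Y(y)+\log p_{\hat\epsilon}(x-g(y))$ one gets $\partial_x\partial_y\pi/\partial_x^2\pi=-g'(y)$, a function of $y$ alone, hence $\partial_x\bigl(\partial_x\partial_y\pi/\partial_x^2\pi\bigr)=0$ wherever $\partial_x^2\pi\neq 0$; the hypothesis $\nu''(y-f(x))\,f'(x)\neq 0$ guarantees $\partial_x\partial_y\pi\neq 0$, so this ratio is meaningful and nonzero. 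Substituting the forward expressions into $\partial_x\bigl(\partial_x\partial_y\pi/\partial_x^2\pi\bigr)=0$, clearing denominators, and collecting the terms in $\xi''',\xi''$ against the products of $f',f'',f''',\nu',\nu'',\nu'''$ yields the stated third‑order ODE; this is exactly the computation behind \cite[Theorem~1]{hoyer2009nonlinear}, which one may either reproduce or invoke verbatim.

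The main obstacle is really the first step: verifying that ``the solution of Eq.~(\ref{eq:thm1}) exists'' is equivalent to the existence of a backward ANM with $Y$‑independent noise, which requires being careful with the Fourier/shift manipulations and with the reading of ``no unmeasured intermediate noises'' (i.e.\ both $\mathbf N$ and $\hat{\mathbf N}$ empty, so that both the numerator and the denominator of Eq.~(\ref{eq:thm1}) degenerate consistently). Once that reduction is in place, the derivation of the ODE is the routine, if tedious, algebra already carried out in the additive‑noise‑model literature, and the constraint $\nu''(y-f(x))f'(x)\neq 0$ is precisely the regularity needed to divide by $\partial_x\partial_y\pi$ and $\partial_x^2\pi$ throughout.
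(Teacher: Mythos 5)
Your proposal is correct and follows essentially the same route as the paper: specialize Eq.~\ref{eq:thm1} to empty $\mathbf{N}$ and $\hat{\mathbf{N}}$, invert the Fourier transform to obtain the backward-ANM factorization $p_{\hat\epsilon}(x-g(y))=p(x)p_\epsilon(y-f(x))/p(y)$, and then apply \cite[Theorem~1]{hoyer2009nonlinear}. The only difference is that you additionally sketch the internal derivation of the ODE (via $\partial_x\bigl(\partial_x\partial_y\pi/\partial_x^2\pi\bigr)=0$), whereas the paper invokes that theorem verbatim; this is fine, modulo the minor point that the regularity hypothesis directly licenses dividing by $\partial_x\partial_y\pi$ rather than by $\partial_x^2\pi$.
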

	
	\begin{proof}
		Since no unobserved intermediate noises, based on Theorem 1, we have
		\begin{equation}
		\label{eq:noise of anm}
		p_{\hat{\epsilon }}\left(\hat{\epsilon }\right) =e^{2\pi i\left(\hat{\epsilon } -g( y)\right) \cdot \nu }\int \frac{p( x) p_{\epsilon }( y-f( x))}{p( y)} e^{-2\pi ix\cdot \nu } dx.
		\end{equation}
		Let $\displaystyle \hat{\epsilon } =x-g( y)$, then based on the Fourier inverse theorem, the existence of the solution of Eq. \ref{eq:noise of anm} is equivalent to the existence of following equation,
		\begin{equation}
		p_{\hat{\epsilon }}( x-g( y)) =\frac{p( x) p_{\epsilon }( y-f( x))}{p( y)} ,
		\end{equation}
		which is the standard identifiability for additive noise model, then applying the \cite[Theorem~1]{hoyer2009nonlinear}, the triple $\displaystyle ( f,p_{x} ,p_{\epsilon})$ must satisfy the following differential equation for all $\displaystyle x,y$ with $\displaystyle \nu ''( y-f( x)) f'( x) \neq 0$:
		\begin{equation*}
		\xi ^{\prime \prime \prime } =\xi ^{\prime \prime }\left( -\frac{\nu ^{\prime \prime \prime } f^{\prime }}{\nu ^{\prime \prime }} +\frac{f^{\prime \prime }}{f^{\prime }}\right) -2\nu ^{\prime \prime } f^{\prime \prime } f^{\prime } +\nu ^{\prime } f^{\prime \prime \prime } +\frac{\nu ^{\prime } \nu ^{\prime \prime \prime } f^{\prime \prime } f^{\prime }}{\nu ^{\prime \prime }} -\frac{\nu ^{\prime }\left( f^{\prime \prime }\right)^{2}}{f^{\prime }} .
		\end{equation*}
	\end{proof}
	
\end{document}